\documentclass{article}
\usepackage{spconf,graphicx}
\usepackage{epsfig}
\usepackage{amsmath,amssymb,amsthm,amsfonts}

\newtheorem{theorem}{Theorem}

\newcommand{\bqn}{\begin{eqnarray}}
	\newcommand{\eqn}{\end{eqnarray}}
\newcommand{\bq}{\begin{eqnarray*}}
	\newcommand{\eq}{\end{eqnarray*}}
\newcommand{\ba}{\left( \begin{array}}
\newcommand{\ea}{\end{array} \right)}

\usepackage{url}
\usepackage{hyperref}
\hypersetup{colorlinks,%
citecolor=black,%
filecolor=black,%
linkcolor=red,%
urlcolor=black,%
pdftex}

\usepackage{xcolor}
\newcommand{\blue}[1]{{\color{black} #1}}

\graphicspath{{figures/}}

\title{Poisson Flow of Cortical Folding in Juvenile Myoclonic Epilepsy}
\name{
Moo K. Chung$^1$
Luigi Maccotta$^2$, 
Aaron Struck$^2$ 
}
\address{$^1$ University of Wisconsin, Madison, USA\\
		$^2$Washington University, St. Louis, USA}
\begin{document}

\maketitle

\begin{abstract}
Cortical folding reflects coordinated neurodevelopmental processes and is increasingly recognized as a sensitive marker of neurological disease. However, most existing analyses rely on indirect scalar summaries that do not explicitly model folding geometry itself. In juvenile myoclonic epilepsy (JME), a common genetic epilepsy, cortical abnormalities are often subtle, spatially distributed, and difficult to detect using conventional morphometric measures. We introduce a Poisson-equation–based framework that models cortical folding as a geometry-driven flow derived from mean curvature on the cortical manifold. By treating folding patterns as a stationary source--sink structure, the proposed approach yields a smooth, globally balanced potential field whose surface gradient defines a physically interpretable flux. This framework enables spatially coherent analysis of sulcal--gyral folding organization and provides a principled representation of geometry-driven cortical structure in JME.
\end{abstract}

\section{Introduction}

Cortical folding arises from a complex interplay between neurodevelopmental processes, mechanical constraints, and differential growth, and its organization is closely linked to both normal brain function and neurological disease \cite{vanessen.1997,zilles.2013}. Accordingly, abnormalities in sulcal and gyral morphology have been reported across a wide range of conditions, including epilepsy \cite{bernhardt.2009}, neurodevelopmental disorders \cite{chung.2005.NI}, and neurodegenerative diseases \cite{kim.2014.NI}, indicating that folding patterns encode clinically meaningful information beyond gross cortical shape.

Juvenile myoclonic epilepsy (JME) is a genetic epilepsy with onset in adolescence and a strong neurodevelopmental component. Although conventional structural MRI is typically unremarkable, converging evidence suggests that JME is associated with subtle and spatially distributed morphological alterations. Prior morphometric studies have reported distributed cortical abnormalities in JME, particularly involving fronto-central and higher-order association regions \cite{bernhardt.2009,wandschneider.2019}. \blue{These findings are consistent with a distributed neurodevelopmental disorder rather than focal cortical pathology.} Our recent volumetric study \cite{struck.2025} further demonstrates selective alterations in motor-associated thalamic nuclei and corticothalamic circuits. Together, these findings indicate that JME reflects altered neurodevelopment across distributed brain regions rather than localized cortical abnormalities.

\blue{
Despite this growing recognition of JME as a {\it distributed} neurodevelopmental disorder \cite{bernhardt.2009,struck.2025}, most existing structural imaging studies continue to rely on summary measures such as cortical thickness, gyrification index, or regional volumes. These measures primarily quantify local or regional morphometry and provide limited information about the intrinsic geometry and spatial organization of sulcal--gyral folding patterns themselves \cite{huang.2020.TMI,im.2019,lyu.2018.MIA}. Such descriptors reduce complex folding architecture to isolated measurements and therefore cannot capture how sulci and gyri are arranged as a coordinated, continuous structure across the cortical surface. This limitation is particularly relevant in JME, where morphological abnormalities are spatially distributed across multiple cortical regions rather than localized to individual cortical locations. These considerations motivate the need for a quantitative framework that explicitly models cortical folding as a geometric object on the cortical manifold, enabling spatially coherent characterization of folding organization and its disease-related alterations \cite{bunyamin.2025}.
}

In this study, we model cortical folding as a geometry-driven flow governed by a screened Poisson equation acting on the mean curvature field \cite{chung.2026.EMBC.poisson,kazhdan.2010}. Rather than analyzing individual sulci and gyri separately, we treat signed mean curvature as a distributed source--sink field on the cortical manifold. Solving the screened Poisson equation yields a smooth scalar potential representing the spatial organization of cortical folding, whose gradient defines an intrinsic flow from sources to sinks across the cortical surface. We apply this framework to characterize alterations in cortical folding organization in JME. The resulting potential provides a smooth, spatially coherent representation of sulcal--gyral organization while retaining more fine-scale folding information than often used diffusion smoothing \cite{chung.2005.IPMI,desbrun.1999}.

\blue{
The main contributions of this work are threefold. (i) We introduce a screened Poisson flow model that provides an intrinsic, spatially coherent representation of cortical folding. Unlike local descriptors such as sulcal depth \cite{fischl.1999,vanessen.2005,yao.2023}, the screened Poisson potential couples folding geometry across the cortical manifold. Previous applications of screened Poisson equations have largely focused on 2D image enhancement and processing \cite{bhat.2008,morel.2014} and mesh fairing and reconstruction \cite{kazhdan.2010}. To our knowledge, screened Poisson models have not previously been used for population-level neuroimaging analysis or for modeling cortical folding variability across subjects.

(ii) We establish the theoretical spectral and spatial properties of the screened Poisson representation on cortical manifolds. Although Fourier-domain properties of screened Poisson equations have been studied in Euclidean domains \cite{bhat.2008}, their spectral behavior has not been systematically characterized in relation to diffusion-based smoothing. The mechanism by which screened Poisson filtering retains higher-frequency information has received limited theoretical attention. We show that its spectral response decays algebraically with spatial frequency, in contrast to the exponential attenuation of heat diffusion \cite{chung.2005.IPMI,desbrun.1999}, providing a theoretical justification for its greater preservation of fine-scale geometric information. We further characterize its spatial localization and develop an empirical procedure for selecting $\lambda$ based on its effective spatial scale on the cortical surface.

(iii) We demonstrate the utility of the proposed framework in JME. Vertexwise coritcal analysis identifies spatially coherent alterations in folding potential that differ from those detected by conventional cortical thickness, sulcal depth, and local gyrification index. Compared with scale-matched heat diffusion, the screened Poisson representation also produces far stronger and more spatially coherent group effects, demonstrating complementary sensitivity to cortical folding organization.
}

\section{Methods}

\subsection{Imaging Data and Preprocessing}

Participants were drawn from the prospective Juvenile Myoclonic Epilepsy Connectome Project (JMECP), a single-center study conducted at the University of Wisconsin Hospital and Clinics \cite{struck.2025}. The study enrolled adolescents and young adults (12--25 years) with juvenile myoclonic epilepsy (JME) and age- and sex-matched healthy controls. \blue{A diagnosis of JME was supported by at least two of the following electroclinical criteria \cite{struck.2025}: (i) clinical description or direct observation of early morning myoclonic jerks, (ii) clinical description or direct observation of generalized tonic--clonic seizures, and (iii) electroencephalographic evidence of bursts of 3.5--5 Hz generalized spike--wave and/or polyspike--wave discharges. Additional eligibility criteria included English proficiency and a verbal and performance IQ of at least 80. Exclusion criteria comprised inability to provide informed consent, clinical or electroencephalographic features suggestive of focal epilepsy, structural brain lesions other than nonspecific white matter abnormalities on dedicated 3T epilepsy-protocol MRI, and an active infectious etiology of seizures. Healthy controls were neurologically healthy and age- and sex-matched to the JME cohort.}

The present study included 61 individuals with JME (mean age 19.72 $\pm$ 3.70 years) and 39 healthy controls (mean age 21.44 $\pm$ 2.35 years) with no history of neurological or psychiatric disorders. There were 22 males and 39 females in JME, while the control group included 19 males and 20 females. There was no significant difference in sex distribution between the groups (Fisher’s exact test \cite{fisher.1922}, p = 0.22). However, JME and control groups differed significantly in age (two-sample t-test \cite{chung.2004.NI}, p = 0.011). Thus, age and sex were  included as covariates in all group-level analyses.

All participants underwent structural MRI scanning on GE MR750 3T clinical scanners equipped with a 32-channel phased-array head coil. High-resolution T1-weighted images were acquired as part of a comprehensive epilepsy imaging protocol \blue{(TR = 604 ms, TE = 2.516 ms, TI = 1060 ms, flip angle = $8^\circ$, field of view = 25.6 cm, 0.8 mm isotropic resolution). High-resolution T2-weighted images (TR = 2500 ms, TE = 94.641 ms, flip angle = $90^\circ$, field of view = 25.6 cm, 0.8 mm isotropic resolution) were also acquired for clinical evaluation \cite{struck.2025}.} 

T1-weighted images were corrected for intensity inhomogeneity using the N4 bias correction algorithm \cite{tustison.2010} implemented in Advanced Normalization Tools and subsequently processed using the standard recon-all pipeline in FreeSurfer (version 7.4.1) \cite{fischl.2012}, including motion correction, nonuniform intensity normalization, Talairach transformation, skull stripping, and automated cortical and subcortical segmentation \cite{struck.2025}. Individual cortical surfaces were reconstructed and subsequently registered using FreeSurfer's folding-based surface registration \cite{fischl.1999} and resampled to the common fsLR 32k surface space, yielding triangulated meshes with 32,492 vertices and 64,980 faces per hemisphere and establishing vertexwise correspondence across subjects. \blue{The pial surface was used in this study because it represents the outer cortical boundary and directly captures the macroscopic geometry of sulcal and gyral folding. Mesh quality control was additionally performed on cortical surface meshes using visual inspection of all meshes and automated mesh computation. Triangle geometry and mesh connectivity were assessed for degenerate triangles and topological defects to verify valid \blue{cortical surface} topology equivalent to a sphere. No degenerate triangles or topological defects were detected in either hemisphere. Regions artificially closed during surface reconstruction to obtain a closed cortical surface were masked and excluded from all subsequent computations and visualizations. Any potential isolated mesh irregularities are further attenuated by the screened Poisson regularization, which suppresses high-frequency geometric variation over an effective spatial extent of approximately 12.4 mm.
}

\subsection{Estimating cortical folding pattern}

Let $\mathcal{M}$ denote a \blue{cortical surface} discretized as a triangulated surface mesh. Brain cortical folding was represented by the vertexwise mean curvature field $h(x)$ defined at each vertex $x$ of the mesh. Mean curvature was estimated using a local quadratic surface fitting approach based on first-ring neighboring vertices \cite{chung.2003.NI,joshi.1995}. In a local coordinate $x=(x_1,x_2)$, the surface was estimated as the quadratic form
\[
f(x_1,x_2)
=
\beta_0
+
\beta_1 x_1
+
\beta_2 x_2
+
\frac{1}{2}\beta_3 x_1^2
+
\beta_4 x_1 x_2
+
\frac{1}{2}\beta_5 x_2^2
\]
using the least squares fit. The first and second fundamental forms were computed as
\[
g
=
\begin{pmatrix}
1+\beta_1^2 & \beta_1\beta_2 \\
\beta_1\beta_2 & 1+\beta_2^2
\end{pmatrix},
\qquad
h
=
\begin{pmatrix}
\beta_3 & \beta_4 \\
\beta_4 & \beta_5
\end{pmatrix}.
\]
Subsequently, the mean curvature at vertex $x$ was then estimated as
\[
h(x)
=
\frac{1}{2}\,\mathrm{tr}\!\left(g^{-1}h\right).
\]
By convention, positive curvatures correspond to sulcal fundi, which we interpret as sources, while negative curvatures correspond to gyral crowns, interpreted as sinks. This scalar field therefore encodes the local folding polarity of the cortex in a simple and intuitive way.

\begin{figure*}[th!]
	\centering
	\includegraphics[width=0.9\linewidth]{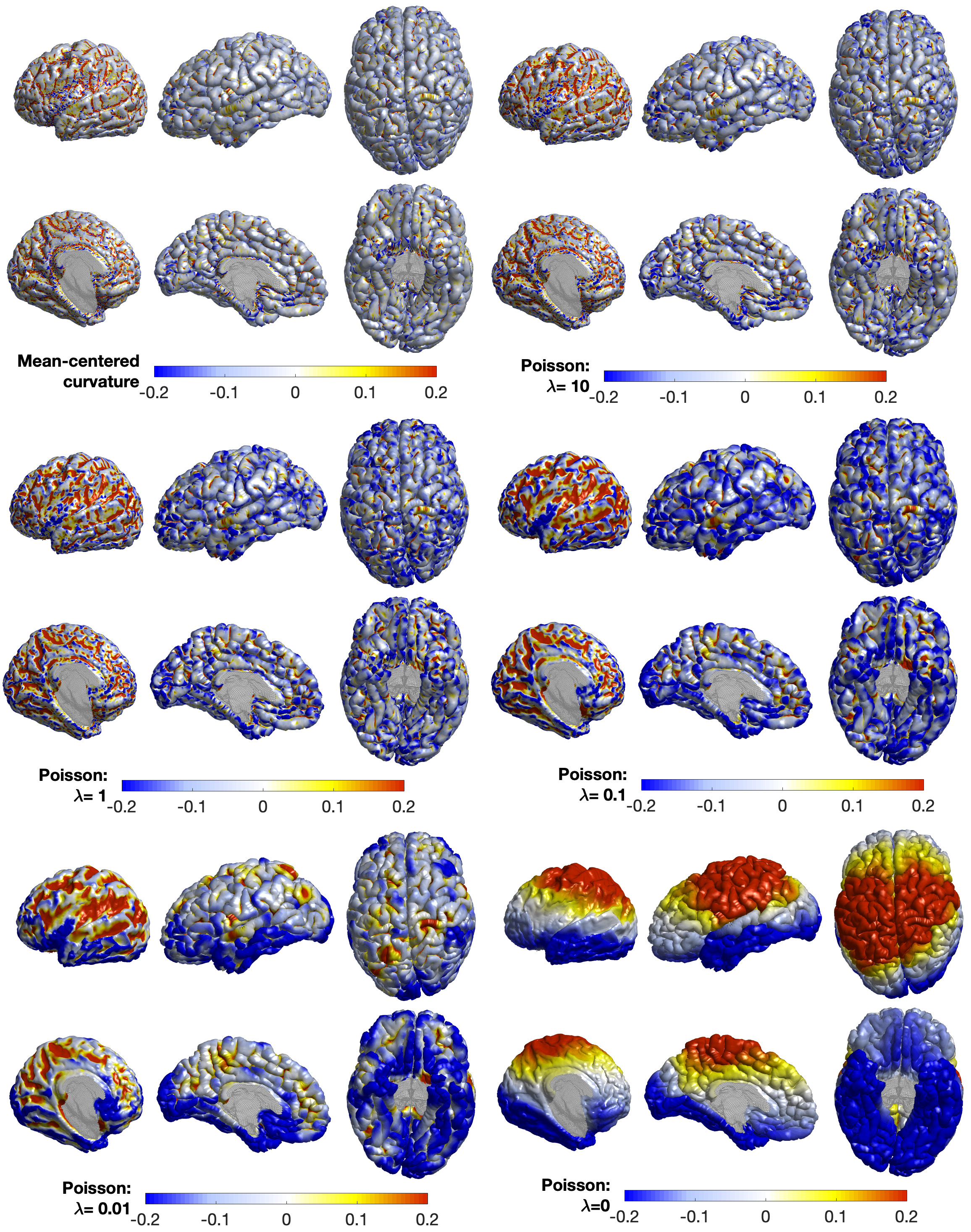}
\caption{\blue{Mean-centered curvature and corresponding solutions of the screened Poisson equation at $\lambda=10$, $1$, $0.1$, $0.01$, and $0$. Larger $\lambda$ retains more local folding information and approaches the mean-curvature field, whereas decreasing $\lambda$ progressively suppresses high-frequency geometric variation and emphasizes broader spatial organization. At $\lambda=0$, the formulation reduces to the standard Poisson equation and is dominated by large-scale spatial modes. We use $\lambda=0.1$ in this study to balance spatial localization and smoothing of cortical folding patterns.}}
\label{fig:poisson}
\end{figure*}

For subsequent analysis, we normalize $h(x)$ by subtracting its area-weighted average,
\[
h(x)\;\leftarrow\; h(x)\;-\;\frac{1}{\mathrm{Area}(\mathcal{M})}
\int_{\mathcal{M}} h(x)\,d\mu(x),
\]
\blue{where $d\mu(x)$ denotes the surface-area element on $\mathcal{M}$ and
$\mathrm{Area}(\mathcal{M})=\int_{\mathcal{M}}d\mu(x)$.} This normalization
enforces the zero-mean condition
\bqn
\int_{\mathcal{M}} h(x)\,d\mu(x)=0,
\label{eq:zero_mean}
\eqn
which is required for the solvability of the Poisson equation on a \blue{closed manifold}. 
Intuitively, removing the global offset ensures that subsequent computations capture only relative, spatially varying folding patterns rather than being influenced by an arbitrary baseline curvature. As a result, the derived potential and flow fields can be interpreted as geometry-driven interactions between sulcal sinks and gyral sources on the cortical surface. Sulcal regions exhibit positive curvature values (around $+0.1$), whereas gyral crowns exhibit negative values (around $-0.1$), such that the sign of $h$ indicates whether a local region is predominantly sulcal or gyral (Fig.~\ref{fig:poisson}, top left).

Sulcal cortex is consistently thinner than gyral cortex across much of the human brain \cite{hutton.2008,holland.2020}, a pattern that has been linked to mechanical tension, differential growth, and folding mechanics during development \cite{vanessen.1997}. Classical models of cortical development suggest that neuronal number per unit surface area is approximately conserved across cortex \cite{rakic.1988,rockel.1980}. Under this assumption, \blue{lower} cortical thickness in sulcal regions implies higher neuronal packing density per unit volume relative to gyral regions, reflecting fundamental differences in cortical geometry rather than localized pathology. Consequently, the signed curvature field $h$ and its derived potential provide a geometrically grounded representation that is closely tied to known microstructural and developmental differences between sulcal and gyral cortex.

\subsection{Poisson model of  cortical folding}
Curvature estimates are known to be sensitive to discretization error, local mesh irregularity, and high-frequency geometric noise \cite{chung.2003.CVPR}. To reduce the influence of such noise on downstream analyses, we smooth the folding pattern. Let $\mathcal{L}=\nabla\cdot\nabla$ denote the Laplace--Beltrami operator on the \blue{closed manifold} $\mathcal{M}$, defined as the {\it divergence of the gradient} \cite{rosenberg.1997}.
 Given the area-normalized signed mean curvature field $h$ (positive in sulci and negative in gyri), a natural smoothing strategy is to recover a potential field $u$ whose Laplacian matches $h$,
\bqn
\mathcal{L}u = h \qquad \text{on } \mathcal{M}.
\label{eq:poisson}
\eqn
\blue{Unlike the Euclidean setting, the Poisson equation~(\ref{eq:poisson}) on a closed manifold does not automatically admit a solution:}

\begin{theorem}[Solvability on a closed manifold]
\label{thm:poisson_solvability}
\blue{
Let $\mathcal{M}$ be a connected, closed Riemannian manifold without boundary.  
The Poisson equation (\ref{eq:poisson}) admits a solution $u$ if and only if $h$ satisfies the zero-mean condition (\ref{eq:zero_mean}).}
When this condition holds, the solution is unique up to an additive constant.
\end{theorem}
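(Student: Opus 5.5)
The plan is to prove necessity by integrating the equation over $\mathcal{M}$, and sufficiency by expanding $h$ in the Laplace--Beltrami eigenbasis.

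\emph{Necessity.} Suppose $u$ solves (\ref{eq:poisson}). I integrate both sides over $\mathcal{M}$ and apply the divergence theorem on a closed manifold, $\int_{\mathcal{M}} \nabla\cdot X\,d\mu = 0$ for any smooth vector field $X$, with $X=\nabla u$. This gives $\int_{\mathcal{M}} h\,d\mu = \int_{\mathcal{M}} \mathcal{L}u\,d\mu = 0$, which is exactly (\ref{eq:zero_mean}).

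\emph{Uniqueness.} Let $u_1,u_2$ be two solutions and set $w=u_1-u_2$, so that $\mathcal{L}w=0$. Green's identity on the boundaryless manifold gives
\[
0=\int_{\mathcal{M}} w\,\mathcal{L}w\,d\mu=-\int_{\mathcal{M}}|\nabla w|^2\,d\mu .
\]
Hence $\nabla w\equiv 0$. Since $\mathcal{M}$ is connected, $w$ is constant.

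\emph{Sufficiency.} This is the main step. I use the spectral theorem for the self-adjoint, nonnegative operator $-\mathcal{L}$ on the compact manifold $\mathcal{M}$. It provides an $L^2(\mathcal{M})$-orthonormal eigenbasis $\{\psi_j\}_{j\ge 0}$ with $-\mathcal{L}\psi_j=\lambda_j\psi_j$ and $0=\lambda_0<\lambda_1\le\lambda_2\le\cdots\to\infty$. The eigenvalue $\lambda_0=0$ is simple, with $\psi_0=\mathrm{Area}(\mathcal{M})^{-1/2}$; simplicity follows from connectedness by the uniqueness argument above.

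Condition (\ref{eq:zero_mean}) says precisely that the coefficient $\langle h,\psi_0\rangle$ vanishes, so $h=\sum_{j\ge1} h_j\psi_j$ with $h_j=\langle h,\psi_j\rangle$. I then define
\[
u=-\sum_{j\ge1}\frac{h_j}{\lambda_j}\,\psi_j + c ,
\]
where $c$ is an arbitrary constant. Because $\lambda_j\ge\lambda_1>0$, this series converges in $L^2(\mathcal{M})$, and in fact in $H^2(\mathcal{M})$, since $\sum_j \lambda_j^2\,|h_j/\lambda_j|^2=\|h\|^2<\infty$. Termwise application of $\mathcal{L}$ returns $h$, so $u$ solves (\ref{eq:poisson}) weakly. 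Elliptic regularity upgrades $u$ to a smooth solution when $h$ is smooth.

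The main obstacle is justifying the spectral facts: discreteness of the spectrum and the gap $\lambda_1>0$. I would take these as standard results for compact manifolds \cite{rosenberg.1997} rather than prove them.

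An alternative route to sufficiency is the Fredholm alternative. Since $\mathcal{L}$ is self-adjoint, its range is the orthogonal complement of its kernel. By the uniqueness step the kernel consists of the constants, so the range is exactly the set of zero-mean functions.

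For the triangulated mesh, the same argument carries over to the cotangent Laplacian with its mass matrix. There the kernel is again the constants on a connected mesh, and the equivalence reduces to finite-dimensional linear algebra.
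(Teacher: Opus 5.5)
Your proposal is correct and follows essentially the same route as the paper. Necessity comes from the divergence theorem, sufficiency from an eigenfunction expansion whose coefficients $\langle h,\phi_k\rangle$ are divided by the nonzero eigenvalues, and uniqueness from the fact that the kernel of $\mathcal{L}$ consists only of constants. Beyond the paper, you justify that last fact via Green's identity and connectedness, check the convergence of the series, and use the sign convention $-\mathcal{L}\psi_j=\lambda_j\psi_j$, which is consistent with $\mathcal{L}=\nabla\cdot\nabla$; the paper instead writes $\mathcal{L}\phi_k=\gamma_k\phi_k$ with $\gamma_k\ge 0$.
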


\begin{proof}
Assume first that $u$ solves $\mathcal{L}u=h$. 
Then from the {\it divergence theorem} on a closed manifold \cite{chavel.1984}, we have
\[
\int_{\mathcal{M}} h\,d\mu
=
\int_{\mathcal{M}} \mathcal{L}u\,d\mu
=
\int_{\mathcal{M}} \nabla \cdot (\nabla u)\,d\mu
=
0.
\]
Hence the zero-mean condition is necessary.

Conversely, assume that \(
\int_{\mathcal{M}} h\,d\mu = 0 .
\)
Let $\{(\phi_k,\gamma_k)\}_{k=0}^\infty$ be an orthonormal eigenbasis of $\mathcal{L}$, so that
\[
\mathcal{L}\phi_k=\gamma_k\phi_k,\qquad 0=\gamma_0<\gamma_1\le\gamma_2\le\cdots,
\]
and $\phi_0$ is constant. Since $\phi_0$ is constant, we have
 \(\langle h,\phi_0\rangle=0\).
Define
\[
u=\sum_{k=1}^\infty \frac{\langle h,\phi_k\rangle}{\gamma_k}\,\phi_k,
\]
which is well defined. Then we have
\[
\mathcal{L}u
=
\sum_{k=1}^\infty \frac{\langle h,\phi_k\rangle}{\gamma_k}\,\mathcal{L}\phi_k
=
\sum_{k=1}^\infty \langle h,\phi_k\rangle\,\phi_k.
\]
On the other hand, expanding $h$ in the same basis yields
\[
h=\sum_{k=0}^\infty \langle h,\phi_k\rangle\,\phi_k
= \sum_{k=1}^\infty \langle h,\phi_k\rangle\,\phi_k, \qquad
\]
Therefore, the given series $u$ is a solution. 

Finally, if $u_1$ and $u_2$ are two different solutions, then $\mathcal{L}(u_1-u_2)=0$ implies $u_1-u_2$  is constant. Hence the solution is unique up to an additive constant.
\end{proof}

\blue{From Theorem~\ref{thm:poisson_solvability}, the Poisson equation (\ref{eq:poisson}) on a closed manifold has an inherent nonuniqueness: if $u$ is a solution, then $u+c$ is also a solution for any constant $c$, since $\mathcal{L}(u+c)=\mathcal{L}u$. To remove this constant-offset ambiguity, we adopt the screened Poisson equation~\cite{chung.2026.EMBC.poisson,kazhdan.2010},
\bqn
\mathcal{L}u+\lambda u=h,
\label{eq:regualized}
\eqn
where $\lambda>0$ is a regularization parameter. The additional term makes the solution unique because, if $u$ is a solution, adding a constant $c$ gives
$
(\mathcal{L}+\lambda I)(u+c)
=
h+\lambda c.
$
Thus, $u+c$ satisfies the same equation only if $c=0$.}

The Poisson equation (\ref{eq:poisson}) acts as a low-pass filter: inversion of the Laplace--Beltrami operator $\mathcal{L}$ strongly attenuates high-frequency components of $h$ while emphasizing its large-scale source--sink structure. The resulting solution $u$ can be interpreted as the smoothest scalar field whose second-order variations reproduce the observed folding polarity. However, this strong spectral attenuation can lead to excessive smoothing, whereby local geometric features are suppressed and global modes dominate the solution (Fig.~\ref{fig:poisson}). The regularization parameter $\lambda$ controls this smoothing by moderating the spectral attenuation of the screened Poisson equation, thereby increasing spatial locality and retaining more mesoscale folding structure. As $\lambda$ approaches zero, the solution approaches the standard Poisson limit and is dominated by large-scale, spatially coherent curvature patterns. Increasing $\lambda$ progressively reduces the dominance of these global modes and increases the contribution of more localized folding patterns. At very low regularization (e.g., $\lambda=0.001$), prominent structure emerges around deeply and consistently folded regions, such as the primary sensorimotor cortex. In this study, we set $\lambda=0.01$ to balance noise suppression with preservation of local folding detail (Fig.~\ref{fig:poisson}).

\subsection{Screened Poisson model}

The screened Poisson \blue{equation (\ref{eq:regualized})} admits a closed form analytic solution in terms of the spectral decomposition of the Laplace--Beltrami operator.

\begin{theorem}[Spectral solution of the screened Poisson equation]
\label{thm:spectral_poisson}
The screened Poisson \blue{equation (\ref{eq:regualized})} admits a unique solution given by
\bqn
u
=
\sum_{k=0}^\infty
\frac{1}{\gamma_k+\lambda} \langle h,\phi_k\rangle\,\phi_k,
\label{eq:spectral}
\eqn
where $\{(\phi_k,\gamma_k)\}_{k=0}^\infty$ are the eigenpairs satisfying
\[
\mathcal{L}\phi_k = \gamma_k \phi_k,
\qquad
0 = \gamma_0 < \gamma_1 \le \gamma_2 \le \cdots.
\]
\end{theorem}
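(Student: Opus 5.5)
The plan mirrors the proof of Theorem~\ref{thm:poisson_solvability}: expand everything in the orthonormal eigenbasis $\{(\phi_k,\gamma_k)\}$ of $\mathcal{L}$, which is complete in $L^2(\mathcal{M})$ on a closed manifold. The screened operator $\mathcal{L}+\lambda I$ is diagonal in this basis, with eigenvalues $\gamma_k+\lambda$. Throughout I take the eigenvalue convention exactly as stated in the theorem, $\mathcal{L}\phi_k=\gamma_k\phi_k$ with $0=\gamma_0<\gamma_1\le\cdots$, so that $\gamma_k+\lambda\ge\lambda>0$ for every $k$. This is the key difference from the unscreened case. There the $k=0$ mode had to be removed via the zero-mean condition~(\ref{eq:zero_mean}); here no mode needs removing, and the sum in~(\ref{eq:spectral}) starts at $k=0$.

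\emph{Existence.} First I check that the series~(\ref{eq:spectral}) defines an element of $L^2(\mathcal{M})$. Since $|\langle h,\phi_k\rangle/(\gamma_k+\lambda)|\le \lambda^{-1}|\langle h,\phi_k\rangle|$, Parseval gives $\|u\|^2\le \lambda^{-2}\|h\|^2<\infty$. In fact the weights $(\gamma_k+\lambda)^{-1}$ decay, so $u$ lies in the Sobolev space $H^2$, and $\mathcal{L}u$ makes sense. I then apply $\mathcal{L}+\lambda I$ termwise, using $(\mathcal{L}+\lambda I)\phi_k=(\gamma_k+\lambda)\phi_k$, to obtain $\sum_k\langle h,\phi_k\rangle\phi_k=h$, which is the eigen-expansion of $h$.

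\emph{Uniqueness.} Suppose $u_1,u_2$ both solve~(\ref{eq:regualized}) and set $w=u_1-u_2$, so that $(\mathcal{L}+\lambda I)w=0$. Taking the inner product with each $\phi_k$ and using self-adjointness of $\mathcal{L}$ gives $(\gamma_k+\lambda)\langle w,\phi_k\rangle=0$. Since $\gamma_k+\lambda>0$, every coefficient of $w$ vanishes, and completeness gives $w=0$. This also recovers the constant-offset argument given in the text after Theorem~\ref{thm:poisson_solvability} as the special case $k=0$. Equivalently, projecting the equation onto $\phi_k$ shows that any solution must have coefficients $\langle u,\phi_k\rangle=\langle h,\phi_k\rangle/(\gamma_k+\lambda)$, which gives existence and uniqueness at once.

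\emph{Main obstacle.} The only step I expect to need care is justifying the termwise application of $\mathcal{L}$ to an infinite series. I would handle it in the weak sense. For any smooth test function $\varphi$, self-adjointness gives $\langle u,(\mathcal{L}+\lambda)\varphi\rangle=\sum_k\langle h,\phi_k\rangle\langle\phi_k,\varphi\rangle=\langle h,\varphi\rangle$, using that the partial sums converge in $L^2$. Elliptic regularity then upgrades this weak solution to a strong one. A second, minor point is the sign convention: with $\mathcal{L}=\nabla\cdot\nabla$ the spectrum is actually nonpositive. I will follow the theorem's stated convention, under which $\gamma_k+\lambda$ is bounded away from zero. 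This positivity is the only property the argument uses.
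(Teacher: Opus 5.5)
Your proposal is correct and follows the paper's route: expand $u$ and $h$ in the orthonormal eigenbasis, use that $\mathcal{L}+\lambda I$ acts diagonally with eigenvalues $\gamma_k+\lambda>0$, and match coefficients by orthonormality to get $\langle u,\phi_k\rangle=\langle h,\phi_k\rangle/(\gamma_k+\lambda)$. The paper does only this coefficient matching, which is your ``equivalently'' remark. Your $L^2$ bound, the weak-sense justification of termwise application, the explicit uniqueness step, and the sign-convention caveat add rigor but do not change the argument.
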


\begin{proof}
Since $\{\phi_k\}$ is an orthonormal basis,
both $h$ and $u$ admit the expansions
\[
h = \sum_{k=0}^\infty \langle h,\phi_k\rangle\,\phi_k,
\qquad
u = \sum_{k=0}^\infty \langle u,\phi_k\rangle\,\phi_k .
\]
Then we have
\[
(\mathcal{L}+\lambda)u
=
\sum_{k=0}^\infty (\gamma_k+\lambda)\,\langle u,\phi_k\rangle\,\phi_k .
\]
Equating this expression with the expansion of $h$ and exploiting
orthonormality of $\{\phi_k\}$ gives, for each $k$,
\[
(\gamma_k+\lambda)\,\langle u,\phi_k\rangle
=
\langle h,\phi_k\rangle .
\]
Solving for $\langle u,\phi_k\rangle$ and substituting back into the
expansion of $u$ yields the desired expansion. 
\end{proof}

Theorem \ref{thm:spectral_poisson} allows construction of the ground-truth synthetic data in later validation. The spectral representation makes the role of the regularization parameter $\lambda$ explicit. Modes with large eigenvalues $\gamma_k$, which correspond to high-frequency, spatially localized fluctuations, are strongly damped, whereas low-frequency modes that encode large-scale folding patterns are preserved but bounded. Consequently, the inverse operator $(\mathcal{L}+\lambda)^{-1}$ acts as a geometry-adapted low-pass filter on the cortical manifold, reducing noise-driven curvature variations while preventing domination by global modes on a \blue{closed manifold}. Increasing $\lambda$ progressively suppresses long-range source–sink interactions and yields more spatially localized smoothing, while $\lambda=0$ recovers the classical Poisson limit dominated by global structure. This links the screened Poisson \blue{model (\ref{eq:regualized})} directly to diffusion-based operators such as heat-kernel smoothing, with $\lambda$ controlling the effective diffusion scale \cite{chung.2005.NI}.

\subsection{Screened Poisson vs. heat diffusion}

\label{sec:poisson-diffusion}
\blue{
For functional signal $h$ on the cortical manifold $\mathcal{M}$, heat diffusion is defined by
\[
\frac{\partial f(x,t)}{\partial t}
=
-\mathcal{L}f(x,t),
\qquad
f(x,0)=h(x).
\]
The exact solution is given by convolution with the heat kernel $K_t$ \cite{chung.2015.MIA,chung.2026.ISBI}:
\[
f(x,t)
=
K_t*h(x)
=
e^{-t\mathcal{L}}h(x).
\]
Expanding the signal spectrally,
\[
h(x)
=
\sum_{k=0}^{\infty}
\langle h,\phi_k\rangle\phi_k(x),
\]
which gives
\[
f(x,t)
=
\sum_{k=0}^{\infty}
e^{-t\gamma_k}
\langle h,\phi_k\rangle\phi_k(x).
\]
Thus, heat diffusion has spectral response $e^{-t\gamma_k}$, which decreases exponentially with increasing spatial frequency $\gamma_k$.

The screened Poisson model (\ref{eq:regualized}) is closely related to heat diffusion but differs fundamentally in how diffusion scales are combined. This relationship follows directly from the  theorem: 

\begin{theorem}[Heat-kernel representation of the screened Poisson solution]
\label{thm:poisson_heatkernel}
The screened Poisson equation (\ref{eq:regualized}) admits the solution
\[
u(x)
=
\int_{0}^{\infty}
e^{-\lambda t}K_t*h(x)\,dt,
\]
where
\[
K_t*h(x)
=
\int_{\mathcal{M}}
K_t(x,y)h(y)\,d\mu(y)
\]
is convolution with the heat kernel $K_t$.
\end{theorem}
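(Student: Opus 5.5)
The plan is to work entirely in the Laplace--Beltrami eigenbasis and reduce the claim to a scalar Laplace-transform identity applied to each mode. Then Theorem~\ref{thm:spectral_poisson} identifies the result as the unique solution. Throughout, the sign convention must match the spectral ones used earlier. The eigenpairs satisfy $\mathcal{L}\phi_k=\gamma_k\phi_k$ with $\gamma_k\ge 0$, and the heat semigroup acts by $e^{-t\gamma_k}$ on the $k$-th mode. So I will treat $\mathcal{L}$ as the nonnegative operator whose spectrum is $\{\gamma_k\}$, consistent with the expansions already displayed for $K_t*h$ and for the screened Poisson solution (\ref{eq:spectral}).

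First I substitute the heat-kernel expansion
\[
K_t*h(x)=\sum_{k=0}^\infty e^{-t\gamma_k}\langle h,\phi_k\rangle\phi_k(x)
\]
into the proposed integral. This gives
\[
\int_0^\infty e^{-\lambda t}K_t*h\,dt=\sum_{k=0}^\infty\Big(\int_0^\infty e^{-(\lambda+\gamma_k)t}\,dt\Big)\langle h,\phi_k\rangle\phi_k .
\]
The scalar integral equals $1/(\gamma_k+\lambda)$, which is finite for every $k$ because $\lambda>0$ and $\gamma_k\ge 0$; in particular the $k=0$ mode converges thanks to the screening term. The resulting series is exactly (\ref{eq:spectral}). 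By Theorem~\ref{thm:spectral_poisson} it is the unique solution of (\ref{eq:regualized}), so the representation follows.

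The main obstacle is justifying the exchange of the $t$-integral with the infinite sum. I will do this in $L^2(\mathcal{M})$ rather than pointwise. The function $t\mapsto e^{-\lambda t}K_t*h$ is continuous into $L^2$, and its norm satisfies
\[
\|e^{-\lambda t}K_t*h\|\le e^{-\lambda t}\|h\|,
\]
since $|e^{-t\gamma_k}|\le 1$ for every $k$. Hence the Bochner integral converges absolutely. Taking inner products with each $\phi_k$ commutes with the integral, because the inner product is a bounded linear functional. So the $k$-th coefficient of the integral is $\int_0^\infty e^{-(\lambda+\gamma_k)t}dt\,\langle h,\phi_k\rangle$. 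Equivalently, one can apply Fubini/Tonelli to the coefficient sequence, using the bound
\[
\sum_k \Big(\int_0^\infty e^{-(\lambda+\gamma_k)t}|\langle h,\phi_k\rangle|\,dt\Big)^2\le\lambda^{-2}\|h\|^2 .
\]
Either way, two functions with the same coefficients agree in $L^2$, which gives the identity.

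Finally, as an operator-level check, I will note that $\int_0^\infty e^{-\lambda t}e^{-t\mathcal{L}}dt=(\mathcal{L}+\lambda)^{-1}$. This is the resolvent formula for the heat semigroup, and it confirms the result independently of the coefficient computation. The solution $u$ is therefore an exponentially weighted superposition of heat-diffused curvature over all scales $t$. This is the form the paper uses next to contrast the screened Poisson filter with single-scale heat diffusion.
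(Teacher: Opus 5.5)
Your proposal is correct and follows the paper's proof essentially verbatim: you expand $K_t*h$ in the Laplace--Beltrami eigenbasis, integrate each mode against $e^{-\lambda t}$ to obtain $1/(\lambda+\gamma_k)$, and identify the resulting series with the spectral solution (\ref{eq:spectral}) of Theorem~\ref{thm:spectral_poisson}. Two of your steps are additions the paper omits but do not change the route: the $L^2$/Bochner justification for exchanging the $t$-integral with the sum (via $\|e^{-\lambda t}K_t*h\|\le e^{-\lambda t}\|h\|$), and your remark that $\mathcal{L}$ must be read as the nonnegative operator.
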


\begin{proof}
Expand the curvature field as
\[
h
=
\sum_{k=0}^{\infty}
\langle h,\phi_k\rangle\phi_k.
\]
The heat-kernel convolution is \cite{rosenberg.1997}
\[
K_t*h
=
e^{-t\mathcal{L}}h
=
\sum_{k=0}^{\infty}
e^{-t\gamma_k}
\langle h,\phi_k\rangle\phi_k.
\]
Taking its Laplace transform with respect to diffusion time gives
\[
\int_{0}^{\infty}
e^{-\lambda t}K_t*h\,dt
=
\sum_{k=0}^{\infty}
\langle h,\phi_k\rangle\phi_k
\int_{0}^{\infty}
e^{-(\lambda+\gamma_k)t}\,dt.
\]
Since
\[
\int_{0}^{\infty}
e^{-(\lambda+\gamma_k)t}\,dt
=
\frac{1}{\lambda+\gamma_k},
\]
we obtain
\[
\int_{0}^{\infty}
e^{-\lambda t}K_t*h\,dt
=
\sum_{k=0}^{\infty}
\frac{\langle h,\phi_k\rangle}
{\lambda+\gamma_k}\phi_k,
\]
which is  the spectral solution (\ref{eq:spectral}) given in Theorem~\ref{thm:spectral_poisson}.
\end{proof}

\begin{figure}[t]
	\centering
	\includegraphics[width=1\linewidth]{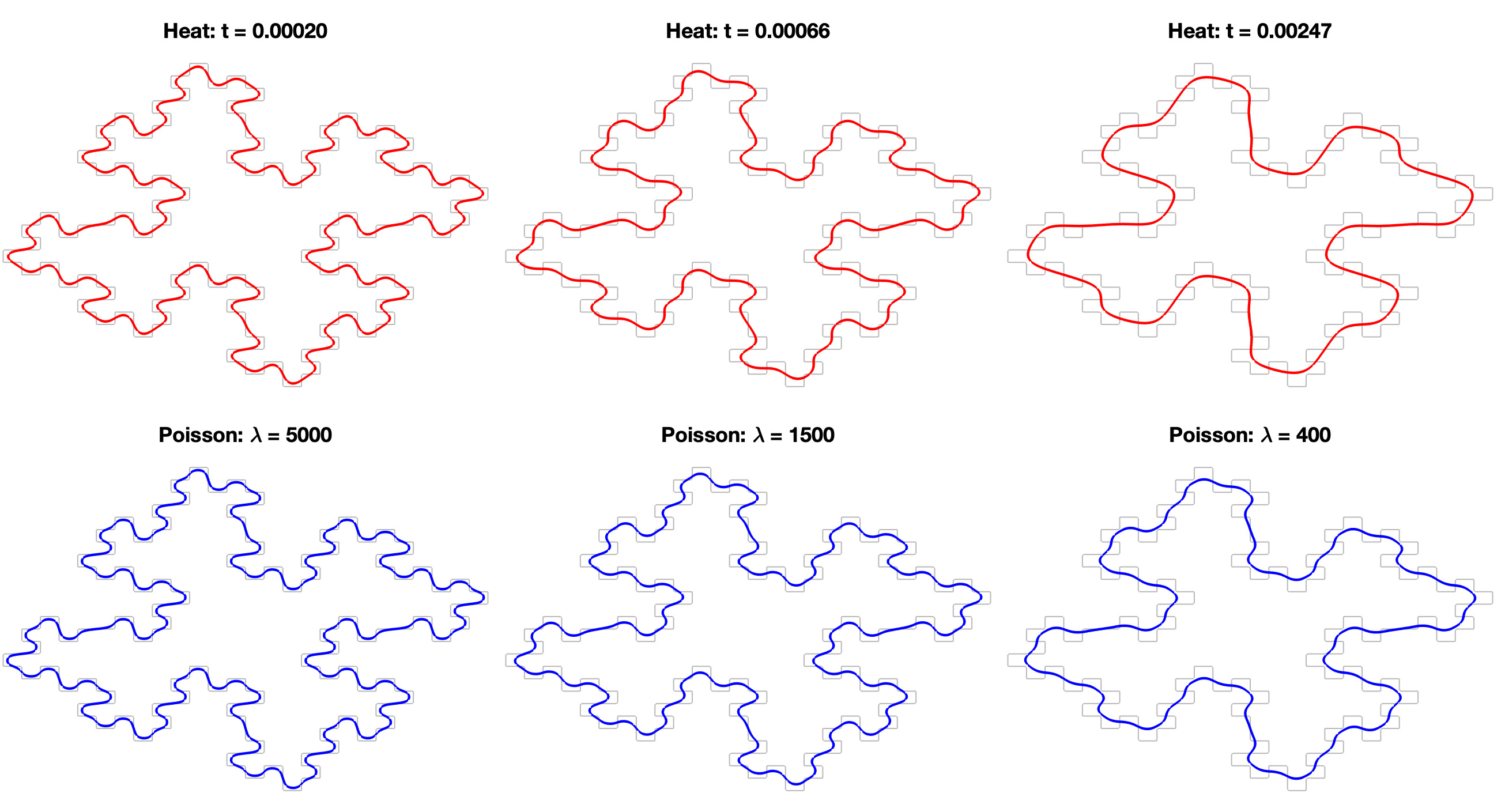}
	\caption{\blue{Comparison of heat diffusion and screened Poisson filtering on a  convoluted closed curve. The gray curve denotes the original multiscale geometry. {\bf Top:} Heat diffusion progressively suppresses fine-scale geometric structure. {\bf Bottom:} Screened Poisson filtering at matched scales retains substantially more of the fine convoluted geometry. Heat diffusion exhibits exponential spectral attenuation, whereas screened Poisson filtering decays algebraically, resulting in greater preservation of fine-scale structure.}}
	\label{fig:poisson-diffusion}
\end{figure}

\begin{figure}[t]
	\centering
	\includegraphics[width=1\linewidth]{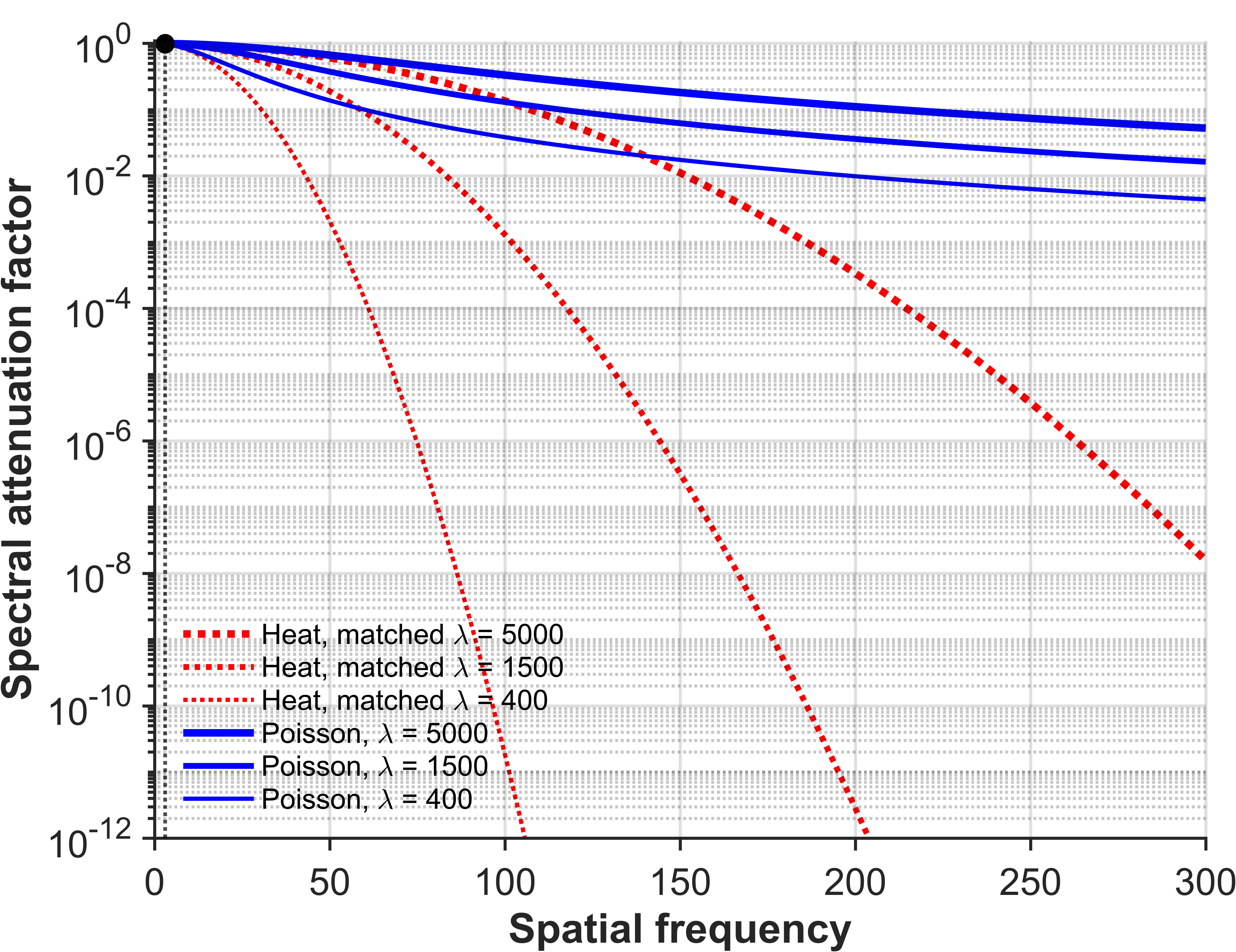}
\caption{\blue{Spectral response of heat diffusion and screened Poisson filtering at three matched smoothing scales. The spectral attenuation factor measures the fraction of each spatial-frequency component retained after filtering. For each $\lambda$, the diffusion time $t$ is selected so that the two filters have the same attenuation at the reference frequency $k_0$. Poisson filtering retains substantially more intermediate- and high-frequency geometric information, preserving fine-scale folding and branching structure.}}
	\label{fig:attenutation}
\end{figure}

Theorem \ref{thm:poisson_heatkernel} makes explicit that the screened Poisson solution can be interpreted as a Laplace transform, with respect to diffusion time $t$, of heat diffusion. At the operator level, this relationship is also expressed through the {\it resolvent identity} \cite{engel.2000},
\[
(\mathcal{L}+\lambda I)^{-1}
=
\int_{0}^{\infty}
e^{-\lambda t}e^{-t\mathcal{L}}\,dt.
\]
Since $e^{-t\mathcal{L}}h=K_t*h$, applying the resolvent to the curvature field $h$ gives
\[
u
=
(\mathcal{L}+\lambda I)^{-1}h
=
\int_{0}^{\infty}
e^{-\lambda t}K_t*h\,dt.
\]
Thus, the screened Poisson solution is an exponentially weighted superposition of heat diffusion over a continuum of diffusion times. Short diffusion times correspond to highly local smoothing, whereas long diffusion times encode increasingly global interactions on the manifold. The exponential weighting $e^{-\lambda t}$ controls the relative contribution of these diffusion scales: increasing $\lambda$ more strongly suppresses long-time diffusion and therefore yields a more spatially localized potential, whereas decreasing $\lambda$ incorporates progressively longer-range geometric information. This establishes a direct mathematical connection between the screened Poisson parameter $\lambda$ and diffusion time $t$, which we subsequently use to determine a corresponding heat-diffusion scale.

Heat diffusion has spectral response \(e^{-t\gamma_k}\).
From Theorem \ref{thm:spectral_poisson}, the screened Poisson operator has response
\(
\frac{\lambda}{\gamma_k+\lambda}.
\)
Their spectral responses behave similarly in the low-frequency range near $\gamma_k=0$:
\bq
e^{-\gamma_k/\lambda}
&=
1-\frac{\gamma_k}{\lambda}
+O(\gamma_k^2),\\
\frac{\lambda}{\gamma_k+\lambda}
&=
1-\frac{\gamma_k}{\lambda}
+O(\gamma_k^2).
\eq
Thus,  we can simply match their spectral response at the low-frequency range
\[
e^{-t\gamma_k}
=
\frac{\lambda}{\gamma_k+\lambda},
\]
which gives
\[
t
=
\frac{1}{\gamma_k}
\log\left(1+\frac{\gamma_k}{\lambda}\right) \to \frac{1}{\lambda},
\]
where the limit follows from L'Hôpital's rule. Thus, $t=1/\lambda$ provides a natural correspondence between heat diffusion and screened Poisson filtering by matching their low-frequency behavior. At higher frequencies, however, the two spectral responses increasingly diverge: heat diffusion decays exponentially, whereas screened Poisson filtering decays only algebraically. Consequently, screened Poisson filtering retains substantially more intermediate- and high-frequency geometric information at the matched low-frequency scale.

Fig.~\ref{fig:poisson-diffusion} illustrates this distinction using a closed curve with known ground-truth geometry. Although the two methods are matched at the low-frequency scale, heat diffusion progressively removes fine geometric convolutions, whereas screened Poisson filtering retains considerably more multiscale structure. Fig.~\ref{fig:attenutation} shows the corresponding spectral attenuation profiles, demonstrating the exponential decay of heat diffusion compared with the slower algebraic decay of screened Poisson filtering.
}

\subsection{Variational interpretation of Poisson flow}
The Poisson \blue{equation (\ref{eq:regualized}) can be equivalently formulated as a variational optimization problem on the manifold $\mathcal{M}$}. Consider the energy functional
\bqn
\mathcal{E}_{\lambda}(u)
=
\int_{\mathcal{M}} \|\nabla u\|^2 \, d\mu
+ \lambda \int_{\mathcal{M}} u^2 \, d\mu
-
2 \int_{\mathcal{M}} u\,h \, d\mu.
\label{eq:functional}
\eqn
The first term $\int_{\mathcal{M}} \|\nabla u\|^2 \, d\mu$ is the Dirichlet energy, which penalizes spatial roughness of the potential and enforces smoothness with respect to the intrinsic geometry of the cortical manifold \cite{chung.2003.NI,rosenberg.1997}. On a closed surface, this term admits the equivalent quadratic form
\[
\int_{\mathcal{M}} \|\nabla u\|^2 \, d\mu
=
\int_{\mathcal{M}} u\,\mathcal{L}u \, d\mu
=
\langle u, \mathcal{L}u \rangle,
\]
where  $\langle \cdot,\cdot\rangle$ is the  inner product with respect to the area measure $d\mu$. Minimizing the Dirichlet energy suppresses high-frequency components of $u$ associated with large eigenvalues of $\mathcal{L}$, while preserving low-frequency modes that encode large-scale folding organization. Consequently, smoothness is enforced intrinsically along the cortical manifold rather than in the ambient Euclidean space.

\blue{
The second term $\lambda\int_{\mathcal{M}} u^2\,d\mu$ is a zeroth-order Tikhonov regularization term \cite{tikhonov.1977} that penalizes the global amplitude of the solution. In the corresponding Euler--Lagrange equation obtained by taking the first variation of the energy functional with respect to $u$, this term contributes $\lambda u$, changing the operator from $\mathcal{L}$ to $\mathcal{L}+\lambda I$. For the Poisson equation (\ref{eq:poisson}) on a closed manifold, the solution is defined only up to an additive constant $c$ since $\mathcal{L}(u+c)=\mathcal{L}u$ (Theorem \ref{thm:poisson_solvability}). The regularization term removes this constant-offset ambiguity because $(\mathcal{L}+\lambda I)(u+c)=(\mathcal{L}+\lambda I)u+\lambda c$, yielding a unique solution for $\lambda>0$.
}

\blue{
The second term $\lambda\int_{\mathcal{M}} u^2\,d\mu$ is a zeroth-order Tikhonov regularization term \cite{tikhonov.1977} that penalizes the global amplitude of the solution. In the corresponding Euler--Lagrange equation obtained by taking the first variation of the energy functional with respect to $u$, this term contributes $\lambda u$, changing the operator from $\mathcal{L}$ to $\mathcal{L}+\lambda I$. On a closed manifold, $\mathcal{L}$ has a nontrivial null space consisting of constant functions, whereas $\mathcal{L}+\lambda I$ has a trivial null space for $\lambda>0$, thereby yielding a unique solution \cite{kazhdan.2010}.
}

In the corresponding Euler--Lagrange equation resulting from the derivative of enery functional with respect to u???, this term introduces $\lambda u$, changing the operator from $\mathcal{L}$ to $\mathcal{L}+\lambda I$. On a closed manifold, $\mathcal{L}$ has a nontrivial null space consisting of constant functions, whereas $\mathcal{L}+\lambda I$ has a trivial null space for $\lambda>0$, thereby yielding a unique solution \cite{kazhdan.2010}.

 \blue{On a closed manifold, the Laplace--Beltrami operator $\mathcal{L}$ has a nontrivial null space consisting of constant functions. The addition of the regularization term shifts the operator from $\mathcal{L}$ to $\mathcal{L}+\lambda I$, for which the null space is trivial when $\lambda>0$, thereby yielding a unique solution \cite{kazhdan.2010}.
}

\blue{
The second term $\int_{\mathcal{M}} u^2\,d\mu$ is a zeroth-order Tikhonov regularization term \cite{tikhonov.1977}, with weight $\lambda>0$ controlling the penalty on the global amplitude of the solution.  It renders the inverse problem well posed on a closed manifold by eliminating nontrivial elements from the null space of the regularized Laplace--Beltrami operator \cite{kazhdan.2010}.} This term limits the dominance of low-frequency global modes present in the standard Poisson \blue{equation (\ref{eq:regualized})} and increases the spatial locality of the recovered potential, yielding a screened Poisson \blue{equation (\ref{eq:regualized})} with a unique, stable solution.

The final term $-2 \int_{\mathcal{M}} u\,h \, d\mu$ is a data-fidelity term that linearly couples the potential to the curvature-derived source--sink field $h$, enforcing consistency between the inferred potential and the observed sulcal--gyral polarity of cortical folding. Taking the first variation of functional (\ref{eq:functional})  with respect to $u$ yields the Euler--Lagrange equation \cite{kazhdan.2010,gelfand.2000,rosenberg.1997}, which is exactly the screened Poisson equation~(\ref{eq:regualized}). Regions of positive curvature therefore act as effective sources and regions of negative curvature as sinks for the induced flow. Thus, the variational formulation balances the smoothness and global amplitude of $u$ with its consistency with the observed cortical folding geometry.

\begin{theorem}[Variational derivation]
\label{thm:variational_poisson}
The energy \blue{functional  (\ref{eq:functional})} admits a unique minimizer, which satisfies the Euler--Lagrange equation
\[
\mathcal{L}u + \lambda u = h.
\]
\end{theorem}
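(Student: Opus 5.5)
We prove the statement by working in the orthonormal eigenbasis $\{(\phi_k,\gamma_k)\}$ of $\mathcal{L}$ already used in Theorems~\ref{thm:poisson_solvability} and \ref{thm:spectral_poisson}. Since $\mathcal{E}_\lambda$ involves $-\mathcal{L}$ in the sense of $\int\|\nabla u\|^2\,d\mu=\langle u,\mathcal{L}u\rangle$, we adopt the sign convention under which $\mathcal{L}$ is nonnegative, so that $\gamma_k\ge 0$ as written in those theorems. We work on the Sobolev space $H^1(\mathcal{M})$, where $\mathcal{E}_\lambda$ is finite.

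Writing $u=\sum_k u_k\phi_k$ and $h=\sum_k h_k\phi_k$ with $u_k=\langle u,\phi_k\rangle$ and $h_k=\langle h,\phi_k\rangle$, Parseval's identity gives
\[
\mathcal{E}_\lambda(u)=\sum_{k=0}^\infty\Big[(\gamma_k+\lambda)u_k^2-2h_ku_k\Big].
\]
The energy therefore decouples into independent scalar quadratics, one per mode. Completing the square in each mode yields
\[
\mathcal{E}_\lambda(u)=\sum_{k=0}^\infty(\gamma_k+\lambda)\Big(u_k-\frac{h_k}{\gamma_k+\lambda}\Big)^2-\sum_{k=0}^\infty\frac{h_k^2}{\gamma_k+\lambda}.
\]
Because $\gamma_k+\lambda\ge\lambda>0$, every weight in the first sum is strictly positive. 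The last sum is a finite constant, bounded by $\|h\|^2/\lambda$. Hence $\mathcal{E}_\lambda$ is bounded below, and the first sum vanishes exactly when $u_k=h_k/(\gamma_k+\lambda)$ for every $k$. This gives existence of the minimizer, uniqueness, and the explicit form~(\ref{eq:spectral}).

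It remains to check that this minimizer $u^*$ lies in $H^1$. This holds because
\[
\sum_k(1+\gamma_k)\,\frac{h_k^2}{(\gamma_k+\lambda)^2}\le C\,\|h\|^2 .
\]
Theorem~\ref{thm:spectral_poisson} then identifies $u^*$ as the solution of $\mathcal{L}u+\lambda u=h$.

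As a coordinate-free cross-check, which also derives the Euler--Lagrange equation directly, take a test function $v$ and expand
\[
\mathcal{E}_\lambda(u+\epsilon v)=\mathcal{E}_\lambda(u)+2\epsilon\Big(\int\nabla u\cdot\nabla v\,d\mu+\lambda\int uv\,d\mu-\int hv\,d\mu\Big)+\epsilon^2\Big(\int\|\nabla v\|^2\,d\mu+\lambda\int v^2\,d\mu\Big).
\]
Integrating by parts on the closed manifold, with no boundary terms by the divergence theorem as in Theorem~\ref{thm:poisson_solvability}, turns the first variation into $2\epsilon\langle \mathcal{L}u+\lambda u-h,\,v\rangle$. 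Setting it to zero for all $v$ gives the stated equation. The second-order term is at least $\lambda\|v\|^2>0$ for $v\neq 0$, so $\mathcal{E}_\lambda$ is strictly convex and coercive, which independently yields uniqueness.

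The main obstacle is functional-analytic rather than algebraic: a minimizer of a functional on $H^1$ only satisfies the equation weakly, and $\mathcal{L}u$ need not a priori make sense classically. We handle this by reading the equation spectrally, or weakly, consistent with how Theorem~\ref{thm:spectral_poisson} was stated. We also note that for $h\in L^2$ elliptic regularity gives $u\in H^2$, so $\mathcal{L}u$ is defined in $L^2$. On the triangulated mesh the same argument is finite-dimensional: the energy becomes $u^\top(L+\lambda M)u-2u^\top Mh$ with $L+\lambda M$ positive definite, and the minimizer is unique with no regularity issues.
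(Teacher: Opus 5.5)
Your proof is correct, but its main line differs from the paper's. The paper argues only through the first variation. It perturbs $u$ by $\varepsilon\eta$, differentiates the three terms of (\ref{eq:functional}), and applies Green's formula to obtain $\int_{\mathcal{M}}(\mathcal{L}u+\lambda u-h)\eta\,d\mu=0$, concluding by arbitrariness of $\eta$. This shows only that a sufficiently smooth minimizer must satisfy the equation. Existence and uniqueness of the minimizer are left to the remark after the proof that $\mathcal{E}_\lambda$ is strictly convex for $\lambda>0$.

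Your primary argument instead diagonalizes $\mathcal{E}_\lambda$ in the Laplace--Beltrami eigenbasis and completes the square mode by mode. This delivers existence, uniqueness, and the explicit minimizer (\ref{eq:spectral}) at once, and ties Theorem~\ref{thm:variational_poisson} directly to Theorem~\ref{thm:spectral_poisson}. It also shows why $\lambda>0$ is needed: the $k=0$ mode has weight $\lambda$. Your ``cross-check'' is essentially the paper's proof. You augment it with the exact second-order term $\int\|\nabla v\|^2\,d\mu+\lambda\int v^2\,d\mu\ge\lambda\|v\|^2$, which justifies the strict convexity the paper only asserts. Strict convexity alone does not guarantee existence; your explicit construction, or coercivity, supplies it.

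You also make explicit two things the paper glosses over. The first is the function space: $H^1$, with the equation read weakly or upgraded by elliptic regularity. The second is the sign convention. With $\mathcal{L}=\nabla\cdot\nabla$ as literally defined in the paper, Green's formula gives $\int\nabla u\cdot\nabla\eta\,d\mu=-\int\mathcal{L}u\,\eta\,d\mu$. So the paper's step and its nonnegative eigenvalues are consistent only under the nonnegative convention you adopt.

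The price of your route is reliance on a complete orthonormal eigenbasis, which is standard on a closed manifold but heavier machinery. The paper's first-variation argument is local, basis-free, and mirrors the FEM system $(C+\lambda A)u=Ah$ directly.
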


\begin{proof}
We first examine how the energy changes under an infinitesimal perturbation of $u$.  
Let $\eta$ be an arbitrary smooth test function on manifold $\mathcal{M}$ and define a perturbed function
\[
u_\varepsilon = u + \varepsilon \eta
\]
for a small scalar $\varepsilon$.  
A necessary condition for $u$ to minimize the energy \blue{functional  (\ref{eq:functional})}
is that its first variation vanishes \cite{gelfand.2000}, i.e.,
\[
\left.\frac{d}{d\varepsilon}\mathcal{E}_\lambda(u+\varepsilon\eta)\right|_{\varepsilon=0} = 0.
\]

The derivative of the Dirichlet energy term with respect to $\varepsilon$ is
\[
\left.
\frac{d}{d\varepsilon}
\int_{\mathcal{M}}\|\nabla(u+\varepsilon\eta)\|^2\,d\mu
\right|_{\varepsilon=0}
=
2\int_{\mathcal{M}}\nabla u\cdot\nabla\eta\,d\mu .
\]

From Green’s formula on a closed manifold \cite{chavel.1984}, we have
\[
\int_{\mathcal{M}} \nabla u \cdot \nabla \eta \, d\mu
=
\int_{\mathcal{M}} \mathcal{L}u \,\eta \, d\mu .
\]
The derivative of the second term gives 
\[
\left.\frac{d}{d\varepsilon}\lambda \int_{\mathcal{M}} (u+\varepsilon \eta)^2 \, d\mu \right|_{\varepsilon=0} =
2\lambda \int_{\mathcal{M}} u\,\eta \, d\mu .
\]
The derivative of the data-fidelity term gives
\[
\left.
\frac{d}{d\varepsilon} 2 \int_{\mathcal{M}} (u+\varepsilon \eta)\,h \, d\mu 
\right|_{\varepsilon=0} =
 2 \int_{\mathcal{M}} \eta\,h \, d\mu .
\]

Combining the three derivatives, we obtain 
\[
\int_{\mathcal{M}} ( \mathcal{L}u + \lambda u - h)\,\eta \, d\mu = 0.
\]
Since the test function $\eta$ is arbitrary, the integrand must vanish pointwise, yielding the Euler-Lagrange equation.
\end{proof}

\begin{figure}[t]
	\centering
	\includegraphics[width=1.0\linewidth]{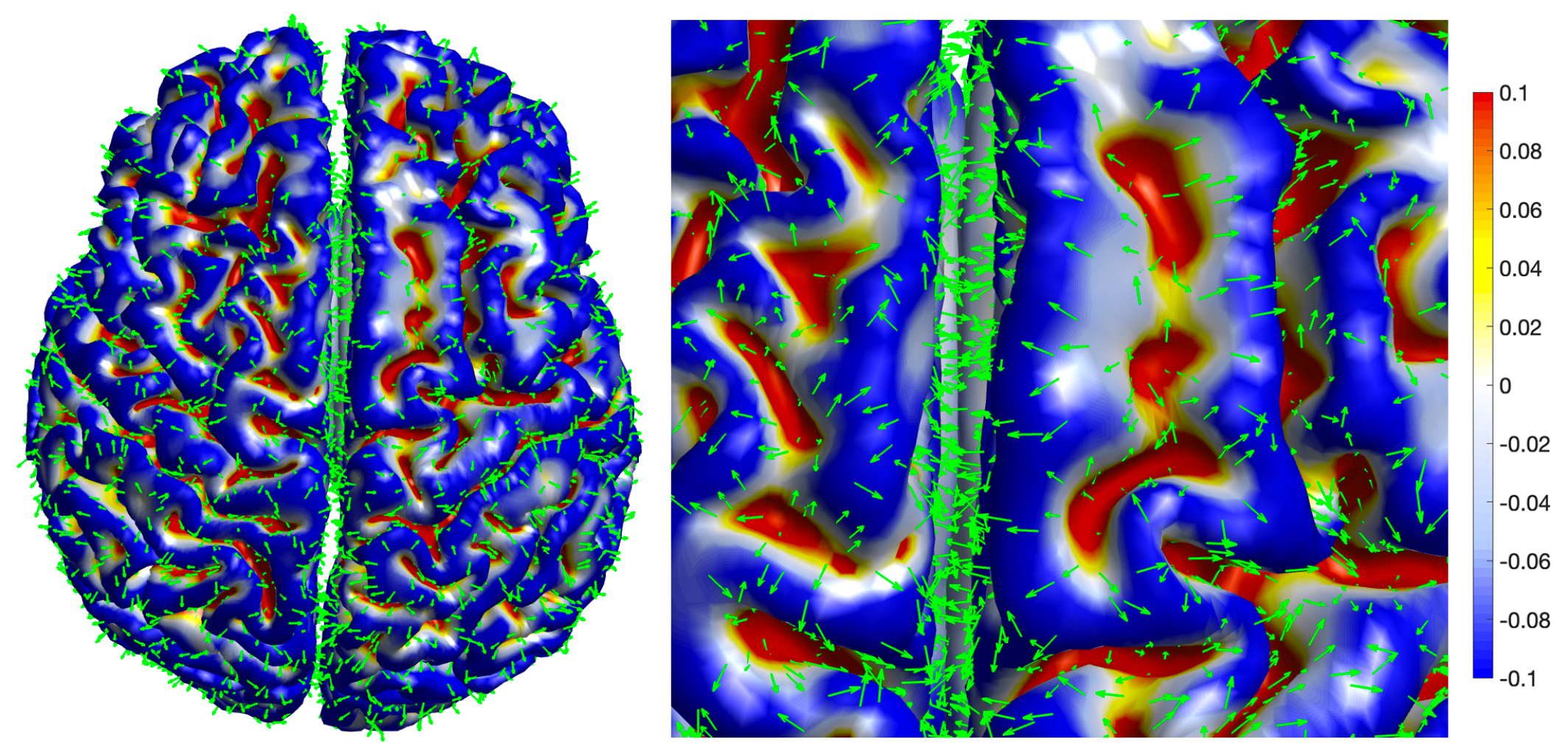}
\caption{Flux field defined as the negative surface gradient of the potential, pointing in the direction of steepest decrease of potential energy from sulcal regions (red) toward gyral regions (blue). Compared to the original curvature data, the resulting flux patterns are smoother and  more coherent. Left: flux field visualized at 10\% of vertices. Right: flux field visualized at 20\% of vertices.}
\label{fig:flux}
\end{figure}

Within the variational framework, the Dirichlet energy defines an intrinsic energy landscape on the cortical manifold, shaped directly by the geometry of the surface. Cortical curvature modulates the potential field in a manner conceptually analogous to how spacetime curvature governs the motion of particles in general relativity \cite{einstein.1916}. Rather than invoking an external force acting through the ambient space, the present formulation captures geometry-driven structure intrinsic to the cortical surface itself, with the direction and magnitude of the induced flow emerging directly from local variations in curvature of the cortical sheet.

Since $\lambda>0$, the \blue{functional (\ref{eq:functional})} is strictly convex and therefore admits a unique minimizer. This variational formulation naturally induces a geometry-driven flow on the cortical surface. The surface gradient of the potential defines a tangential vector field
\[
J=-\nabla u,
\]
which points in the direction of steepest decrease of $u$. \blue{Regions of positive curvature act as sources and regions of negative curvature as sinks for the induced flow. Consequently, the flux field $J$ propagates from sulcal fundi toward gyral crowns, yielding intrinsic, geometry-driven trajectories on the cortical manifold (Fig.~\ref{fig:flux}). The flux should not be interpreted as a physical flow of neural tissue; rather, it provides a mathematical representation of the spatial organization and directionality of cortical folding geometry induced by the energy-minimizing potential $u$. Here, energy minimization refers solely to the variational property of $u$ and does not imply a biological energy-minimizing process.}

\begin{figure*}[t]
	\centering
	\includegraphics[width=0.59\linewidth,angle=90]{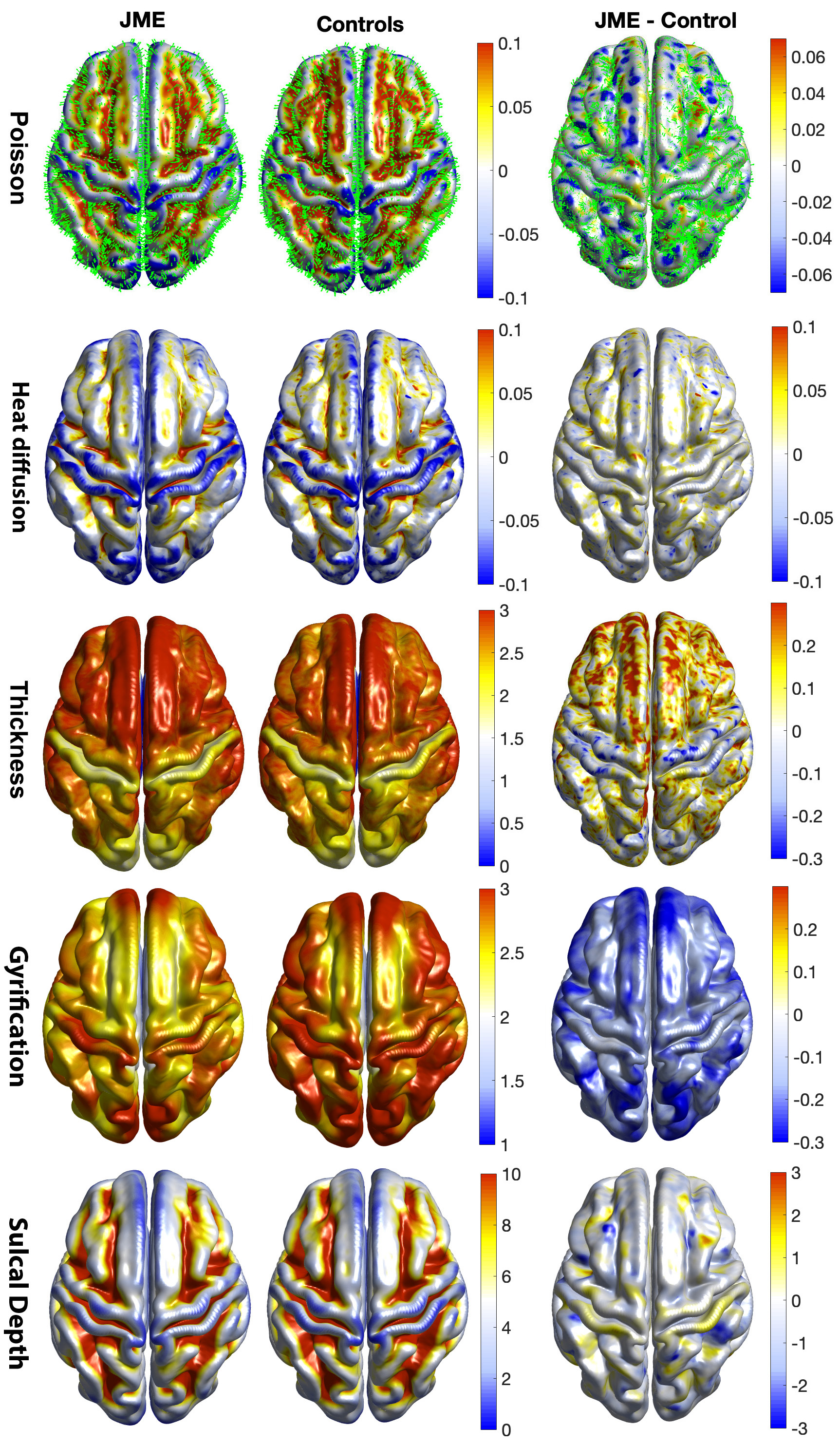}
\caption{\blue{Group-mean cortical morphology in JME, healthy controls, and their difference (JME--Control), characterized using four complementary measures. For the Poisson potential, flux vectors depict the mean geometry-driven flow on the cortical surface from sulci to gyri. Unlike the Poisson potential, which is a signed measure that distinguishes opposing sulcal and gyral contributions, cortical thickness, local gyrification index, and sulcal depth are nonnegative measures. Heat diffusion of mean curvature tends to wash out subtle higher-frequency cortical folds.}}
\label{fig:flux-group}
\end{figure*}

\subsection{Numerical implementation}

We solve the screened Poisson \blue{equation (\ref{eq:regualized})} on the triangulated cortical surface using the cotangent discretization of the Laplace--Beltrami operator \cite{chung.2003.CVPR,huang.2020.TMI}. 
Let $C$ and $A$ denote the stiffness and mass matrices computed by the finite-element method (FEM) \cite{chung.2004.ISBI}. 
The mass matrix $A$ provides a finite-element approximation of local surface-area element:
\[
\int_{\mathcal M} f(x)\,d\mu(x)
\;\longrightarrow\;
\mathbf{1}^{\top} A f ,
\]
where $\mathbf{1}$ denotes the constant vector of ones.  
Thus, a finite-element approximation of the surface area is 
\[
\mathrm{Area}(\mathcal M)
=
\int_{\mathcal M}  \,d\mu(x)
\;\longrightarrow\;
\mathbf{1}^{\top} A \mathbf{1}.
\]
Further, $A$ and $C$ discretize surface inner products as
\bqn 
\langle u, v \rangle &=& 
\int_{\mathcal M} u\,v \, d\mu
\;\longrightarrow\;
u^\top A v,
\label{eq:inner}
\\
\langle \nabla u, \nabla v \rangle  &=&
\int_{\mathcal M} \nabla u \cdot \nabla v \, d\mu
\;\longrightarrow\;
u^\top C v .
\eqn

Under this discretization, the zero-mean condition (\ref{eq:zero_mean}) required for solvability of the Poisson \blue{equation (\ref{eq:poisson})}, is enforced in discrete form as
\[
\mathbf{1}^\top A h = 0 .
\]
This is achieved by subtracting the mean 
\[
\bar h
=
\frac{\int_{\mathcal M} h(x)\,d\mu(x)}{\int_{\mathcal A} 1\,d\mu(x)}
\;\longrightarrow\;
\frac{\mathbf 1^\top A h}{\mathbf 1^\top A \mathbf 1}.
\]
We then discretized the screened Poisson \blue{equation (\ref{eq:regualized})} as 
\bqn
(C+\lambda A)u = A h.
\label{eq:FEMequation}
\eqn
We \blue{solve (\ref{eq:FEMequation})} using a sparse Cholesky factorization \cite{davis.2006}. The regularization parameter $\lambda$ controls the trade-off between spatial locality and smoothness. Larger $\lambda$ suppresses global low-frequency modes, yielding more localized folding patterns, while $\lambda\to 0$ recovers the classical Poisson solution dominated by large-scale structure. We set $\lambda=0.1$ to balance noise suppression with preservation of local sulcal–gyral organization.

After \blue{solving (\ref{eq:FEMequation})}, we apply two post-processing steps that do not alter the underlying geometry-driven flow or subsequent statistical analysis. First, we normalize the overall amplitude of $u$ using the surface-averaged $L^2$-norm so that its global amplitude matches that of the curvature field $h$ as follows. The surface--averaged $L^2$--norm $\|f\|^2$, which is equivalent to the root--mean--square error (RMSE) over the surface, is discretized as
\bqn \|f\|^2 = 
\frac{1}{\mathrm{Area}(\mathcal M)}
\int_{\mathcal M} f(x)^2\,d\mu(x)
\;\longrightarrow\;
\frac{f^{\top}A f}{\mathbf{1}^{\top}A\mathbf{1}} .
\label{eq:l2norm}
\eqn
We then normalize the solution as
$
u \;\leftarrow\;  \frac{\|h\|}{\|u\|} u.
$

Second, we remove any residual area-weighted mean of the potential by enforcing discrete zero-mean condition: 
\bqn
\mathbf{1}^\top A u = 0,
\label{eq:discretezeromean}
\eqn
where $\mathbf{1}$ denotes the constant vector. The Poisson \blue{equation (\ref{eq:poisson})} gives solution only up to an additive constant. 
Thus,  enforcing  the zero-mean constraint provides a consistent reference level across subjects.

This global rescaling and translation preserve all spatial patterns, relative contrasts, and
gradient-derived flux fields, while ensuring consistent amplitude normalization across subjects, meshes, and values of $\lambda$. 
All subsequent statistical analyses are unaffected by this normalization,
since standard test statistics such as $t$- and $F$-statistics are invariant under global translation and scaling of the response variable \cite{casella.2024}. \blue{
The MATLAB implementation is available at \url{https://github.com/laplcebeltrami/poisson}, together with sample cortical surface mesh data.
}

\subsection{Spatial-scale characterization and selection of $\lambda$}
\blue{
In neuroimaging, Gaussian kernel smoothing is commonly parameterized by its full width at half maximum (FWHM), which characterizes the spatial scale of signal averaging and statistical resolution \cite{chung.2005.NI,chung.2007.TMI}. According to the matched filter theorem \cite{north.1963}, statistical sensitivity is maximized when the spatial scale of the filter approximately matches that of the underlying signal. For a symmetric Gaussian profile, the distance from the signal center to its half-maximum boundary is $\mathrm{FWHM}/2$. Motivated by this principle, we characterize the effective spatial scale of the screened Poisson operator using the half-correlation radius $r_{1/2}$, defined as the geodesic distance at which the empirical spatial correlation decreases to one half of its maximum. Thus, $r_{1/2}$ provides a radius-based analogue of the half-maximum scale conventionally used to characterize Gaussian smoothing.

Likewise, the screened Poisson operator
$
(\mathcal L+\lambda)^{-1}
$
acts as a geometry-adapted smoothing operator on the cortical manifold. Consequently, selecting the regularization parameter $\lambda$ requires understanding the spatial scale over which the operator propagates local information. Just as the Gaussian kernel is the Green's function of the heat equation and its bandwidth determines the spatial scale of smoothing, the spatial response of the screened Poisson operator is characterized by its Green's function
$
G_\lambda(x,y),
$
defined as the solution of
\[
(\mathcal L+\lambda)G_\lambda(\cdot,y)
=
\delta(y),
\]
where $\delta(y)$ denotes the Dirac-delta function centered at $y$ \cite{rosenberg.1997}. The Green's function represents the response of the operator to a unit point source and therefore determines the spatial extent over which local curvature information propagates on the cortical manifold. Since
\[
u(x)
=
\int_{\mathcal M}
G_\lambda(x,y)h(y)\,d\mu(y),
\]
the Green's function completely determines how local curvature information propagates across the cortical manifold. The spatial decay of $G_\lambda$ therefore provides a natural analogue of the FWHM used in Gaussian smoothing.

\begin{figure}[t]
	\centering
	\includegraphics[width=0.8\linewidth]{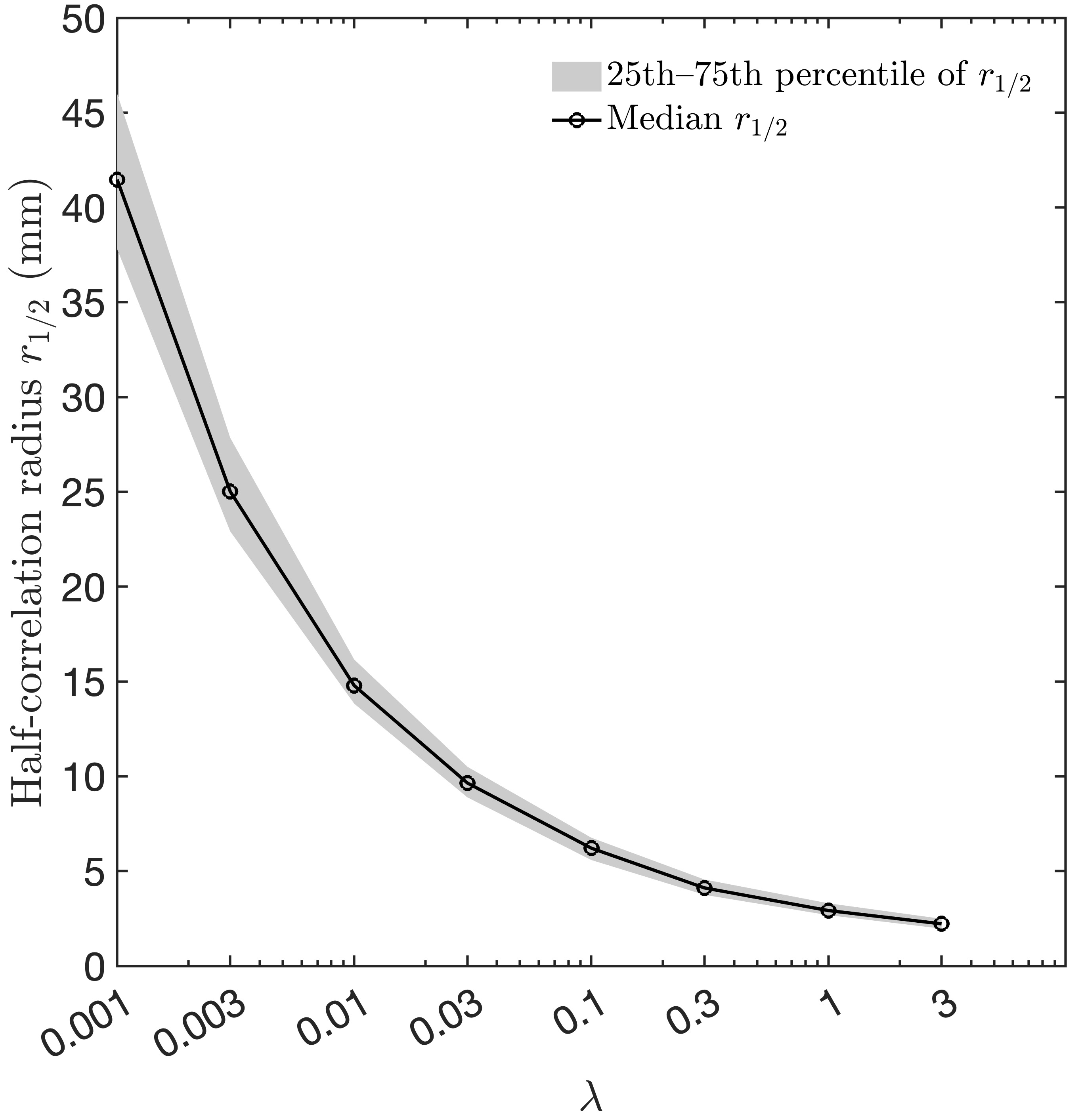}
\caption{\blue{
Empirical spatial scale of the Poisson potential as a function of the regularization parameter $\lambda$. Increasing $\lambda$ produces progressively stronger spatial localization, reducing the median half-correlation radius and thereby yielding finer spatial resolution. We selected $\lambda=0.1$, corresponding to a median half-correlation radius of 6.2~mm, because it provided the finest empirical spatial resolution while preserving statistically significant group differences ($q<0.01$).
}}
\label{fig:lambda_scale}
\end{figure}

Rather than estimating the Green's function directly, we quantified its effective spatial scale empirically from the screened Poisson potentials computed on the cortical surfaces. For each cortical surface and each regularization parameter $\lambda$, we solved the screened Poisson \blue{equation (\ref{eq:regualized})} to obtain the vertexwise potential $u_\lambda$. Let $d(x,y)$ denote the intrinsic geodesic distance between vertices $x$ and $y$. The empirical geodesic autocorrelation function was estimated as
\[
\rho_\lambda(r)
=
\frac{
\mathbb E_{(x,y):\,d(x,y)=r}
\!\left[
u_\lambda(x)u_\lambda(y)
\right]
}
{
\mathbb E_x
\!\left[
u_\lambda(x)^2
\right]
},
\]
where the expectation was approximated by averaging over cortical manifolds. The effective spatial scale was defined as the half-correlation radius
\[
r_{1/2}
=
\inf \{
r:
\rho_\lambda(r)\le\frac12\},
\]
which is analogous to the FWHM of Gaussian kernel smoothing. Figure~\ref{fig:lambda_scale} shows the median empirical autocorrelation functions together with the corresponding median half-correlation radii across all cortical surfaces. As $\lambda$ increases, the autocorrelation decays more rapidly, indicating that the screened Poisson operator becomes increasingly localized and therefore provides finer spatial resolution on the cortical manifold. However, excessively large values of $\lambda$ substantially reduce statistical sensitivity. We therefore selected $\lambda=0.1$ for all subsequent statistical analyses because it provided the finest empirical spatial resolution while preserving statistically significant group differences at the $p<0.01$ level.
}

\section{Validation}
\label{sec:validation}

\begin{figure}[t]
	\centering
	\includegraphics[width=1\linewidth]{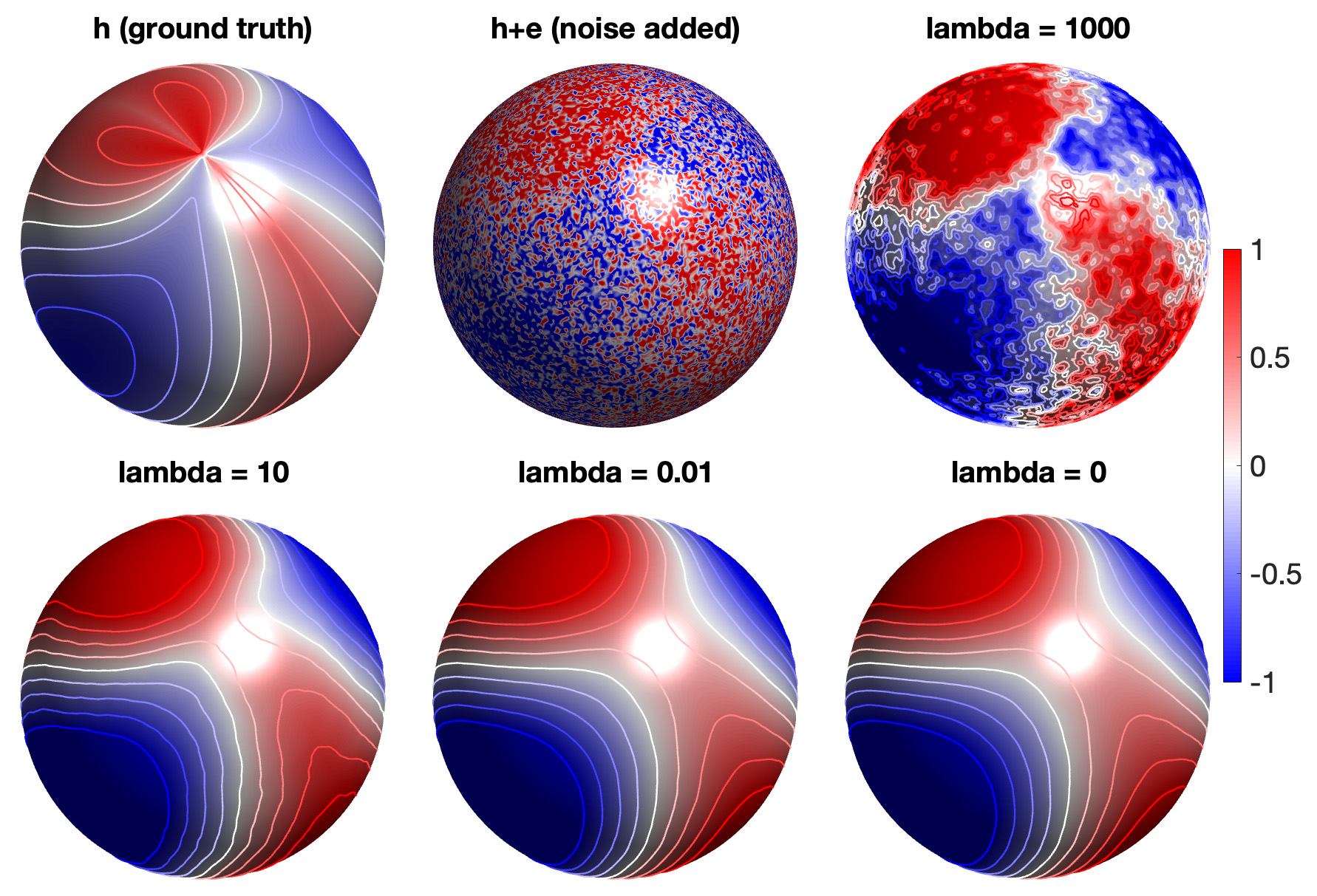}
	\caption{
The ground-truth field $h_{true}$ (top-left) is constructed as a degree--2 spherical-harmonic mixture. A noisy observation $h_{obs} =h_{true}+\epsilon$ is generated by adding Gaussian noise $N(0,1)$ at each vertex. The remaining panels show the recovered potentials $u$ obtained by solving the screened Poisson equation for different values of $\lambda$. 
As $\lambda$ decreases, the solution approaches the classical Poisson limit and exhibits increased sensitivity to high-frequency noise, whereas larger $\lambda$ suppresses fine-scale fluctuations and emphasizes the dominant low-frequency structure.}
\label{fig:validaiton}
\end{figure}

\begin{table*}[t]
\centering
\caption{Performance of the proposed Poisson flow model $u$ on a unit sphere using degree--$2$ spherical--harmonic ground truth $u_{true}$, together with pattern similarity between the recovered potential $u$ and the noisy observation $h_{obs}$ under different noise levels and regularization parameters $\lambda$.}
\label{table1}
\begin{tabular}{c|c|cc|cc}
\hline
Noise & $\lambda$ 
 & $\| u - u_{true} \|$ & $Corr(u, u_{true})$ 
 & $\| u - h_{obs} \|$ & $Corr(u, h_{obs})$ \\
\hline
No noise, $\sigma=0$
& $1000$ & $1.6607\times 10^{-5}$ & $0.99943$ & $1.6610\times 10^{-2}$ & $0.99944$ \\
& $10$   & $5.8837\times 10^{-3}$ & $0.98219$ & $9.4085\times 10^{-2}$ & $0.98219$ \\
& $0.01$ & $1.9661\times 10^{-2}$ & $0.97160$ & $1.1880\times 10^{-1}$ & $0.97160$ \\
& $0$    & $1.9700\times 10^{-2}$ & $0.97158$ & $1.1883\times 10^{-1}$ & $0.97158$ \\
\hline
Small noise, $\sigma=0.1$
& $1000$ & $2.2321\times 10^{-5}$ & $0.99898$ & $6.5677\times 10^{-2}$ & $0.99148$ \\
& $10$   & $5.8900\times 10^{-3}$ & $0.98216$ & $1.1797\times 10^{-1}$ & $0.97252$ \\
& $0.01$ & $1.9681\times 10^{-2}$ & $0.97155$ & $1.3875\times 10^{-1}$ & $0.96198$ \\
& $0$    & $1.9719\times 10^{-2}$ & $0.97154$ & $1.3879\times 10^{-1}$ & $0.96196$ \\
\hline
Large noise, $\sigma=1$
& $1000$ & $1.4616\times 10^{-4}$ & $0.95832$ & $6.8676\times 10^{-1}$ & $0.68508$ \\
& $10$   & $6.1853\times 10^{-3}$ & $0.98033$ & $8.0396\times 10^{-1}$ & $0.56841$ \\
& $0.01$ & $2.0621\times 10^{-2}$ & $0.96871$ & $8.1305\times 10^{-1}$ & $0.55860$ \\
& $0$    & $2.0665\times 10^{-2}$ & $0.96868$ & $8.1307\times 10^{-1}$ & $0.55858$ \\
\hline
\end{tabular}
\end{table*}

Since real cortical data do not provide a ground-truth solution for validation, we generated synthetic source fields on the unit sphere $\mathbb{S}^2$ with known analytic solutions using {\it spherical harmonics} $Y_{\ell m}$ \cite{chung.2007.TMI,chung.2010.NI}, which form an eigenbasis of the spherical Laplacian $\mathcal{L}$ satisfying
\[
\mathcal{L}Y_{\ell m}=  \ell(\ell+1)\,Y_{\ell m}
\]
for $\ell \geq 0$ and $-l \leq m \leq \ell$. 

We construct a noise-free ground-truth  field $h_{0}$ by choosing a finite mixture of modes,
\[
h_{true}(\theta,\phi)=\sum_{r=1}^{R}\alpha_r\,Y_{\ell_r m_r}(\theta,\phi).
\]
Given $h_0$, the screened Poisson \blue{equation (\ref{eq:regualized})} on $\mathbb{S}^2$ admits the spectral solution (Theorem \ref{thm:spectral_poisson})
\[
u_{true}(\theta,\phi)=\sum_{r=1}^{R}\frac{\alpha_r}{\lambda+ \ell_r(\ell_r+1)}\,Y_{\ell_r m_r}(\theta,\phi). 
\]
This construction provides a ground-truth target $u_{0}$ for validating the cotangent finite-element solver \cite{chung.2003.CVPR,chung.2004.ISBI}. To emulate noise arising from discretization and mesh irregularity, we add a Gaussian random field $\epsilon(\theta,\phi)$ to the analytic signal and form
\[
h_{obs}(\theta,\phi)=h_{true}(\theta,\phi)+\epsilon(\theta,\phi),
\]
where $\epsilon$ follows the multivarite distribution $N(0,\sigma^{2}I)$ over whole vertices.

In this study, we used degree-2 spherical harmonics as the ground truth (Fig. \ref{fig:validaiton}-top left):
\[
h_{true}(\theta,\varphi)= Y_{20}(\theta,\varphi)\;-\;Y_{21}(\theta,\varphi)\;+\;Y_{22}(\theta,\varphi)
\]
This gives the corresponding ground-truth 
\[
u_{true}(\theta,\varphi;\lambda)
=
\frac{1}{6+\lambda}\,h_{true}(\theta,\varphi).
\]
Both $h_{true}$ and $u_{true}$ went through the discrete zero–mean \blue{constraint (\ref{eq:discretezeromean}):}
\[
\mathbf{1}^{\top}A h_{true}=0,
\qquad
\mathbf{1}^{\top}A u_{true}=0.
\]
In the validation, no amplitude normalization with respect to $h$ is applied to the recovered potential $u$. Consequently, the comparison between $u$ and $u_{true}$ reflects only discretization and numerical errors of the finite‐element Poisson solver, without any post hoc rescaling that could artificially improve the performance.

The numerical accuracy is quantified using the area--weighted inner product~(\ref{eq:inner}) and the surface $L^2$--norm~(\ref{eq:l2norm}).
The {\it area--weighted correlation} between two fields $f$ and $g$ is computed as
\[
Corr(f,g)
=
\frac{\langle f,g\rangle}
{\sqrt{\| f \| \, \| g \|}} .
\]
The metrics are used to quantify two complementary aspects of performance.  $\|u-u_{true}\|$ (area-weighted RMSE) and $Corr(u,u_{true})$ measures the \emph{numerical accuracy} of the finite-element Poisson solver, i.e., how closely the recovered potential $u$ matches the analytic ground truth $u_{true}$. In contrast,$\|u-h_{obs}\|$ and $Corr(u,h_{obs})$ measures the \emph{pattern fidelity} of the recovered potential relative to the observed noisy data $h_{obs}$. 

The validation results in Table \ref{table1} demonstrate both the numerical accuracy of the cotangent FEM discretization and the noise-filtering behavior of the screened Poisson model. In the noise-free case ($\sigma=0$), the recovered potential $u$ agrees almost perfectly with the analytic ground truth $u_{true}$ for all $\lambda$, with surface-weighted correlations exceeding $0.97$ and RMSE on the order of $10^{-5}$–$10^{-2}$, confirming that the FEM solver accurately reproduces spherical-harmonic solutions of the Laplace–Beltrami operator. As noise is introduced, the numerical agreement between $u$ and $u_{true}$ remains high, indicating that discretization errors are negligible even when the input is contaminated. In contrast, the similarity between $u$ and the noisy observation $h_{obs}$ degrades rapidly as $\sigma$ increases, especially at $\sigma=1$, where correlations drop below $0.7$, reflecting the amplification of high-frequency noise by the inverse Poisson operator. Increasing $\lambda$ partially stabilizes this effect by suppressing noisy components, but also attenuates the true low-frequency signal. Overall, the table shows that discrepancies between $u$ and $h_{obs}$ are driven primarily by the added noise, while the FEM-based Poisson solver itself remains highly accurate across all regularization regimes.

\section{Results}

Our primary quantity of interest is the folding-induced flow
\(
J = -\nabla u,
\)
which characterizes the direction and magnitude of geometric transport along the cortical surface (Fig.~\ref{fig:flux-group}). Direct statistical inference on \(J\) is challenging because it is a vector-valued field constrained to the local tangent plane, requiring multivariate testing and estimation of cross-component covariance that varies with surface geometry. To avoid these complications, we exploit the fact that \(J\) is uniquely determined by the scalar potential \(u\) through a first-order differential operator. Consequently, testing for group differences in \(J\) is mathematically equivalent to testing for differences in the underlying potential \(u\), since any systematic change in the flow must be reflected in the potential field that generates it. This equivalence allows us to perform statistically efficient scalar inference.

To assess group differences in cortical folding while adjusting for age and sex, we perform vertexwise linear regression on the folding potential maps \(\{u_i(v)\}\), where \(u_i(v)\) denotes the Poisson-derived folding potential at vertex \(v\) for subject \(i\). At each vertex \(v\), subject-level potential  are stacked into the vector \({\bf u}(v) = [u_1(v),\ldots,u_n(v)]^{\top}\) and modeled using the general linear model
\bqn
{\bf u}
\;=\;
c_0
\;+\;
c_1\,{\tt Group}
\;+\;
c_2\,{\tt Age}
\;+\;
c_3\,{\tt Sex}
\;+\;
\varepsilon_{\bf u},
\label{eq:regression}
\eqn
where ${\tt Group}$ encodes diagnostic status (JME vs.\ control), and ${\tt Age}$ and ${\tt Sex}$ are included as nuisance covariates. The coefficient $c_1$ captures the local group effect on folding organization and is tested using a vertexwise $t$-statistic (Fig. \ref{fig:t-stat}-top). We also did analysis ignoring sex (Fig. \ref{fig:t-stat}-bottom). The results are almost \blue{indistinguishable} indicating sex is not affecting the analysis. The coefficient $c_1$ captures the local group effect on cortical folding organization and is assessed using a vertexwise $t$-statistic (Fig.~\ref{fig:t-stat}-top). To evaluate the potential contribution of sex, we repeated the analysis excluding sex as a covariate (Fig.~\ref{fig:t-stat}-bottom). The resulting $t$-statistic maps are nearly indistinguishable, indicating that sex does not influence the estimated group effect on folding organization.

\begin{figure*}[t!]
	\centering
	\includegraphics[width=0.85\linewidth]{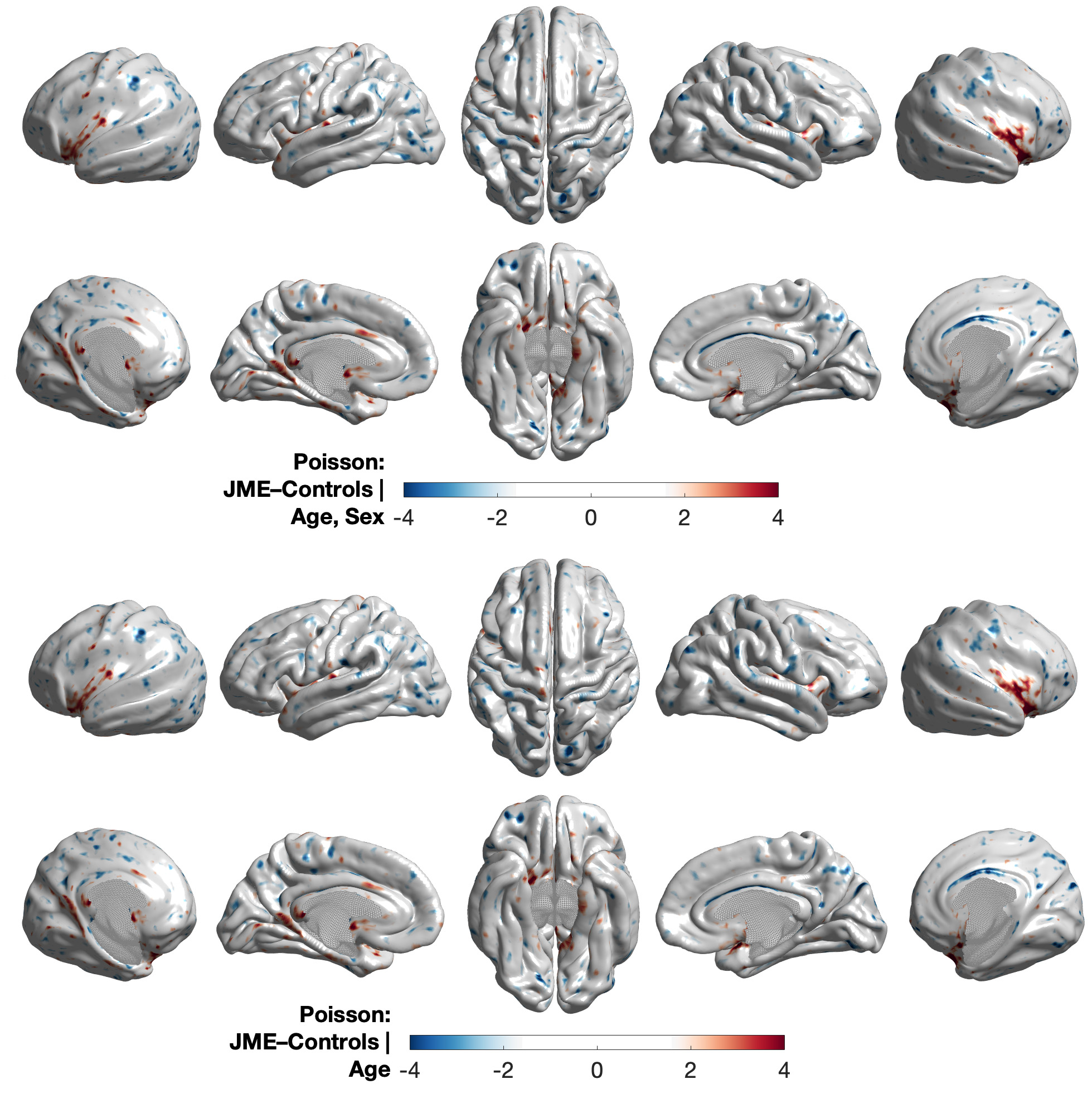}
\caption{$t$-statistic maps for the JME--control contrast, adjusted for age and sex ({\bf top}) and for age only ({\bf bottom}). The far left and far right columns show flattened cortical surfaces. There is a negligible contribution of sex to the group effect. Red clusters (JME$>$controls) exhibit \blue{predominantly greater folding potential} over left fronto-central and medial motor regions, with more focal right-hemisphere effects along the Sylvian fissure, indicating asymmetric and spatially \blue{distributed alterations in cortical folding organization in JME.}}
\label{fig:t-stat}
\end{figure*}

\blue{
The regularization parameter $\lambda$ determines the spatial scale of the screened Poisson operator and therefore influences statistical sensitivity. Thus, we evaluated a range of $\lambda$ values and quantified their empirical spatial scales using the half-correlation radius (Fig.~\ref{fig:lambda_scale}). We selected $\lambda=0.1$, which provided the smallest empirical spatial resolution which provides statistically significant group differences at the $p<0.01$ level. Figure~\ref{fig:otherscale} compares representative statistical maps obtained with $\lambda=1$ (top) and $\lambda=0.01$ (bottom). A smaller regularization parameter ($\lambda=0.01$) produced a broader spatial response, resulting in large, diffuse clusters, whereas a larger parameter ($\lambda=1$) yielded no statistically significant clusters at the $p<0.01$ level. 
} Resulting $p$-valuer were further corrected for multiple comparisons across the cortical surface using false discovery rate (FDR) control \cite{
benjamini.1995,genovese.2002}. Regions displayed in dark red or dark blue (i.e., $|t| \geq 4.3$) correspond approximately to an FDR-corrected $q$-value of 0.01, whereas regions shown in lighter red or blue (i.e., $|t| \geq 3.2$) correspond approximately to an FDR-corrected $q$-value of 0.05.

\begin{figure}[t]
	\centering
	\includegraphics[width=1\linewidth]{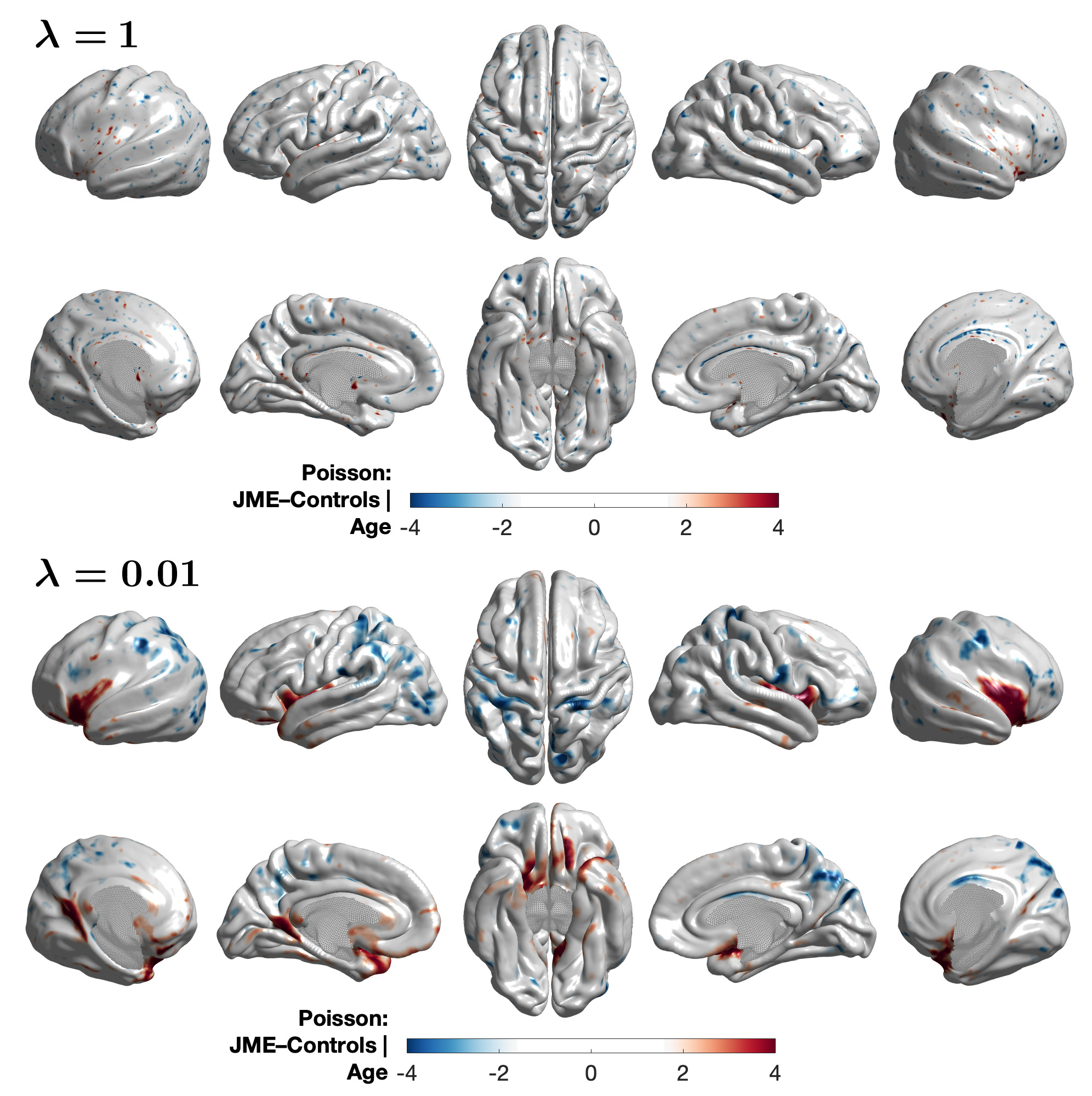}
\caption{\blue{
Statistical maps obtained using $\lambda=1$ (top) and $\lambda=0.01$ (bottom) are shown for comparison. A small regularization parameter ($\lambda=0.01$) produces a broad spatial response, resulting in large diffuse clusters, whereas a large regularization parameter ($\lambda=1$) yields a highly localized signals but no statistically significant clusters at the $q<0.01$ level. 
}}
\label{fig:otherscale}
\end{figure}

Red clusters (JME $>$ controls) in Fig.~\ref{fig:t-stat} identify cortical regions in which folding potential is significantly elevated in JME. Elevated folding potential reflects stronger local curvature contrast and greater coherence of sulcal--gyral organization, indicating region-specific alterations in cortical folding geometry rather than global morphological change. These effects are spatially focal, with the strongest positive $t$-values concentrated along perisylvian and temporal cortices in both hemispheres, together with additional involvement of right fronto-central and medial motor regions \cite{bernhardt.2009,koepp.2013}. 

The exceptionally strong effects observed along the right Sylvian fissure may reflect a compensatory or adaptive reorganization mechanism, potentially driven by chronic epileptic activity and altered synchronization characteristic of JME. Such hemispherically asymmetric amplification suggests that the observed effects are not spatially uniform, but instead reflect regionally specific and \blue{spatially distributed alterations in cortical folding geometry in JME.}

\begin{figure*}[t!]
	\centering
	\includegraphics[width=0.85\linewidth]{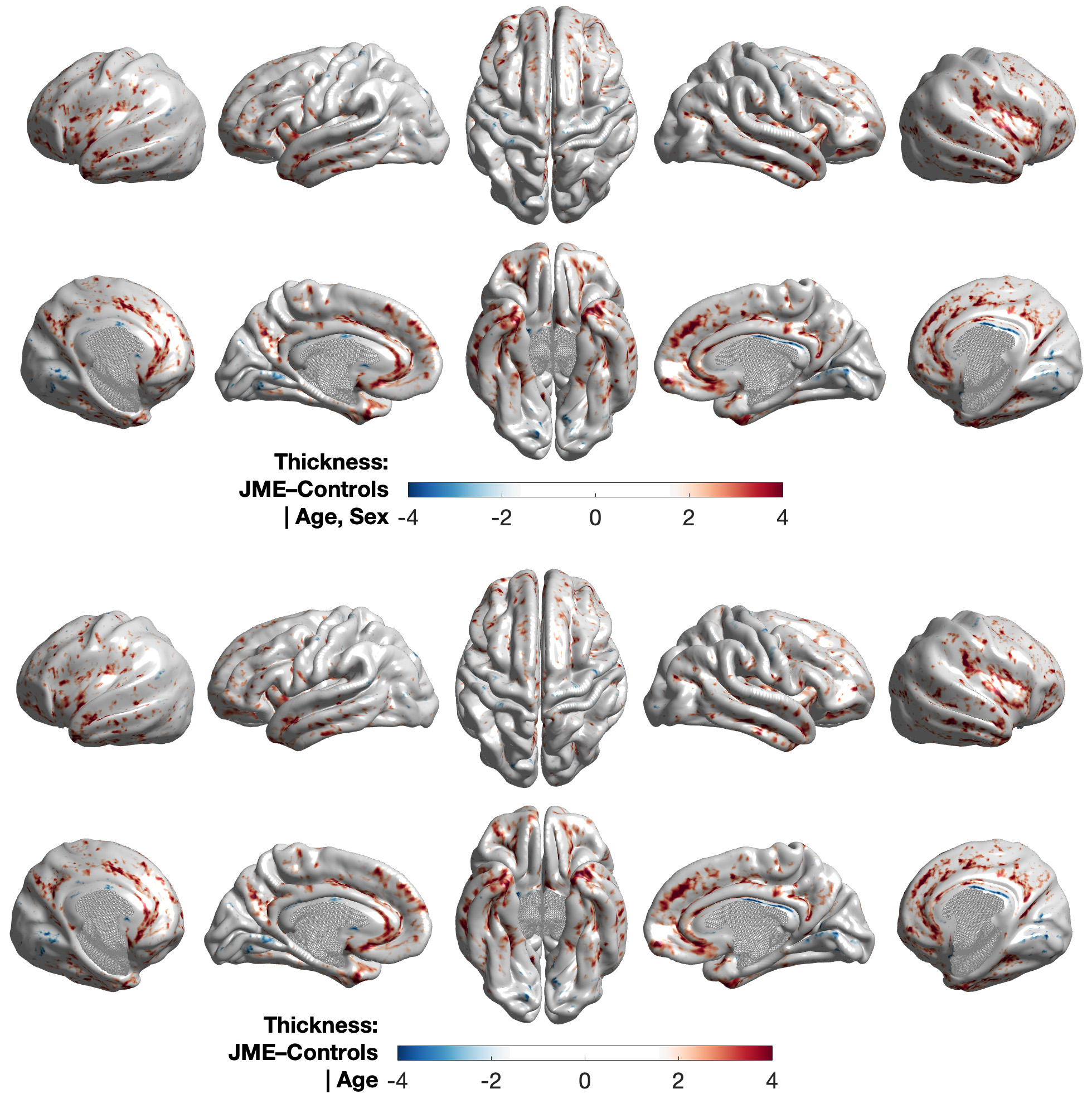}\caption{$t$-statistic maps of cortical thickness for the JME--control contrast, adjusted for age and sex ({\bf top}) and for age only ({\bf bottom}). The two analyses yield nearly identical results, indicating a negligible contribution of sex to the group effect. Red clusters (JME$>$controls) denote regions of \blue{greater} cortical thickness in JME, most prominently involving bilateral temporal and perisylvian cortices, with additional distributed effects across frontal gyri. Blue clusters (JME$<$controls) indicate regions of \blue{lower} cortical thickness, primarily localized to central gyri and visual cortical areas. Overall, JME is characterized by a predominance of regionally \blue{greater} cortical thickness relative to controls.}
\label{fig:thickness}
\end{figure*}

In contrast, blue clusters (JME $<$ controls) in Fig.~\ref{fig:t-stat} indicate regions where folding potential is significantly lower in JME. These effects are more widespread and bilaterally distributed, predominantly involving the postcentral gyrus and visual cortical areas, with additional focal involvement of the right orbital gyri. \blue{Lower} folding potential reflects attenuated curvature contrast and smoother, less differentiated sulcal--gyral geometry, indicating weaker large-scale organization of cortical folding patterns. \blue{Such group differences} are consistent with distributed differences in the sensory and association cortices and support the view that JME involves \blue{spatially distributed structural alterations rather than focal cortical abnormalities \cite{struck.2025}. Taken together, this pattern suggests spatially heterogeneous neurodevelopmental alterations in JME, in which some cortical regions exhibit localized structural accentuation while others show diffuse smoothing or \blue{lower} folding differentiation. This is consistent with models of nonuniform cortical maturation across distributed cortical regions, rather than focal lesions or a single localized abnormality \cite{bernhardt.2009,struck.2025}.}

\subsection{Comparison against cortical thickness}

\begin{figure}[t]
	\centering
	\includegraphics[width=1\linewidth]{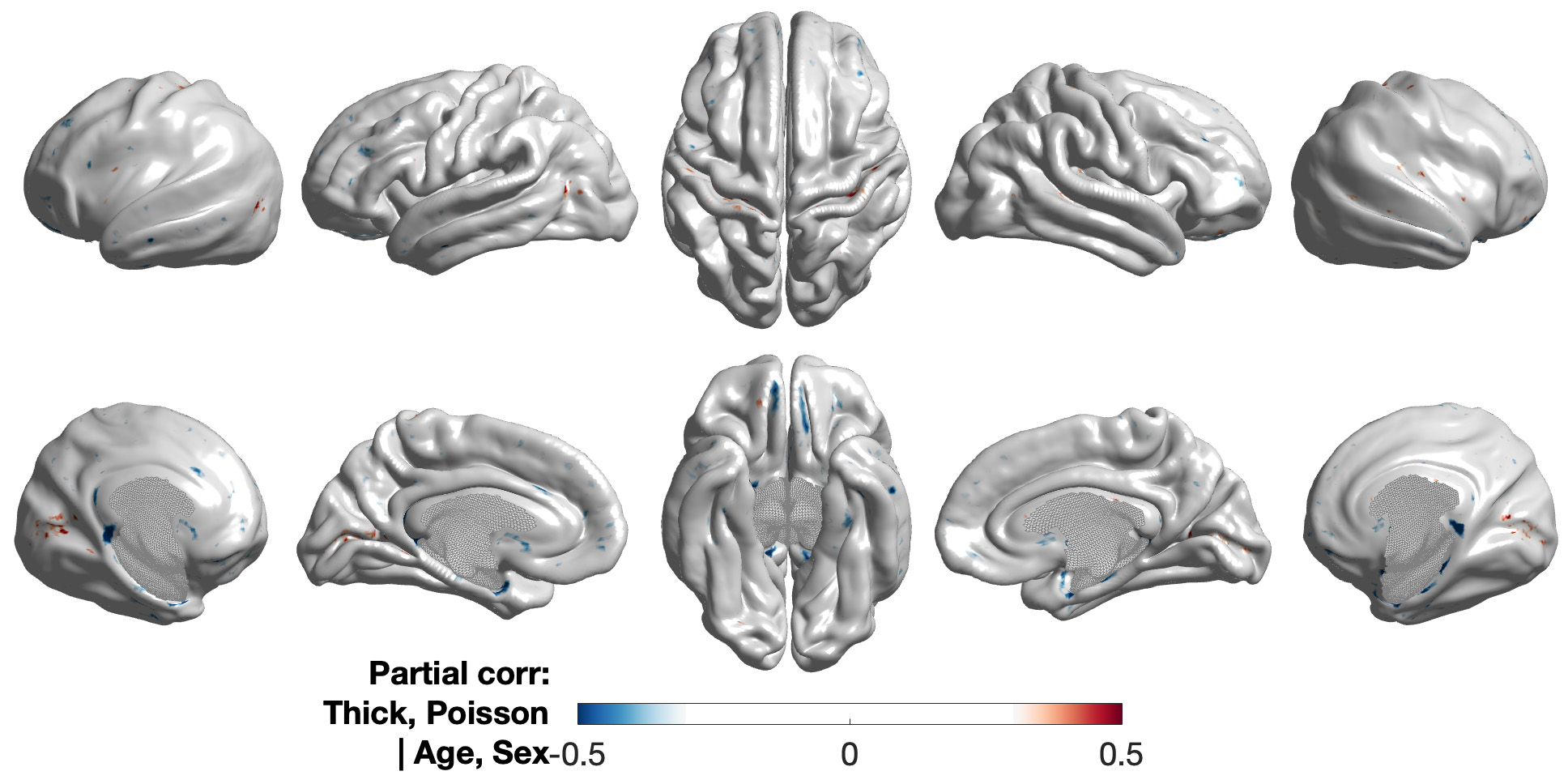}
\caption{Partial correlation between folding potential and cortical thickness, adjusted for age and sex. Most cortical regions show little to no partial correlation, indicating minimal statistical association between the two measures. This suggests that folding potential captures aspects of cortical organization that are largely independent of cortical thickness.}
\label{fig:corr}
\end{figure}

We compared the folding-potential findings with cortical thickness patterns to place the results in a broader morphometric context. Prior studies have reported cortical thickness abnormalities in JME. For example, \blue{lower} cortical thickness in frontal and temporal regions have been documented relative to controls \cite{bernhardt.2009}, while regionally \blue{greater} thickness in orbitofrontal and mesial frontal cortex has also been observed \cite{koepp.2013}. Cortical thickness primarily characterizes laminar properties of the cortex, reflecting variation in the radial (normal-to-surface) direction. 

Group differences in cortical thickness were evaluated by fitting the regression model (\ref{eq:regression}), with age and sex included as covariates. The group effect was statististically quantified by computing the $t$-statistic of the group variable in the model. Statistical significance was assessed after correcting for multiple comparisons across the cortical surface using false discovery rate (FDR) control \cite{benjamini.1995,genovese.2002}. Cortical thickness was computed vertexwise as the Euclidean distance between the white-matter and pial surfaces \cite{fischl.2012}, yielding a local measure of laminar thickness orthogonal to the cortical sheet (Fig.~\ref{fig:flux-group} - bottom). Group differences in cortical thickness were assessed using the regression model (\ref{eq:regression}), with age and sex included as covariates. The group effect was evaluated using the $t$-statistic of the group variable, and statistical significance was determined after controlling for multiple comparisons across the cortical surface using false discovery rate (FDR) correction \cite{benjamini.1995,genovese.2002}. The resulting $t$-statistic maps are shown in Fig.~\ref{fig:thickness}. Analyses adjusted for age and sex and those adjusted for age alone yielded nearly identical spatial patterns, indicating a negligible contribution of sex to the observed group differences.

Red clusters (JME $>$ controls) indicate regions of \blue{greater} cortical thickness in JME, predominantly involving bilateral temporal and perisylvian cortices, with additional distributed effects across frontal gyri. In contrast, blue clusters (JME $<$ controls) indicate regions of \blue{lower} cortical thickness, primarily confined to primary sensorimotor and visual cortices. Overall, JME is characterized by a predominance of regionally \blue{greater} cortical thickness, with more localized \blue{lower} cortical thickness in sensory and visual areas. Darker red or blue ($|t|\geq 4.1$) correspond to an FDR-corrected $q$-value of approximately 0.01, whereas lighter color intensities ($|t|\geq 2.8$) correspond to approximately $0.05$.

As expected, the cortical thickness findings only partially overlap with the folding-potential results, indicating that the two measures capture distinct yet complementary aspects of cortical morphology. Cortical thickness primarily reflects local laminar architecture along the radial direction, whereas the folding potential characterizes the tangential geometric organization of sulcal--gyral patterns along the cortical surface. 

\blue{
To further determine whether the folding potential simply reflects local variations in cortical thickness, we performed a vertexwise partial correlation analysis \cite{chung.2011.QBS} between folding potential and cortical thickness while controlling for the same age and sex covariates used in (\ref{eq:regression}). At each vertex, folding potential ${\bf u}$ is modeled as (\ref{eq:regression})  and cortical thickness ${\bf T}$ across subjects  as
\[
{\bf T}
=
b_0+b_1\,{\tt Age}+b_2\,{\tt Sex}+\varepsilon_{\bf T}.
\]
The partial correlation between folding potential and cortical thickness was then computed as the Pearson correlation between the residuals,
$
corr(\varepsilon_{\bf u},\varepsilon_{\bf T}).
$
}
The resulting partial-correlation map (Fig.~\ref{fig:corr}) showed little to no significant association between folding potential and cortical thickness in regions exhibiting group differences. This weak correspondence indicates that the folding abnormalities identified by the screened Poisson representation are not simply attributable to local cortical thickening or thinning and may remain detectable even when cortical thickness differences are subtle or absent \cite{wandschneider.2019}.

\blue{
The screened Poisson representation may therefore be particularly useful when morphological abnormalities involve the spatial organization of cortical folding rather than local laminar morphology. This is especially relevant when effects extend across broader cortical regions spanning multiple sulcal and gyral structures rather than being confined to individual cortical locations. JME provides such an application, as the observed folding-potential abnormalities are spatially distributed across multiple cortical regions. 
}

\subsection{Comparison against sulcal depth}

\begin{figure*}[t!]
	\centering
	\includegraphics[width=0.85\linewidth]{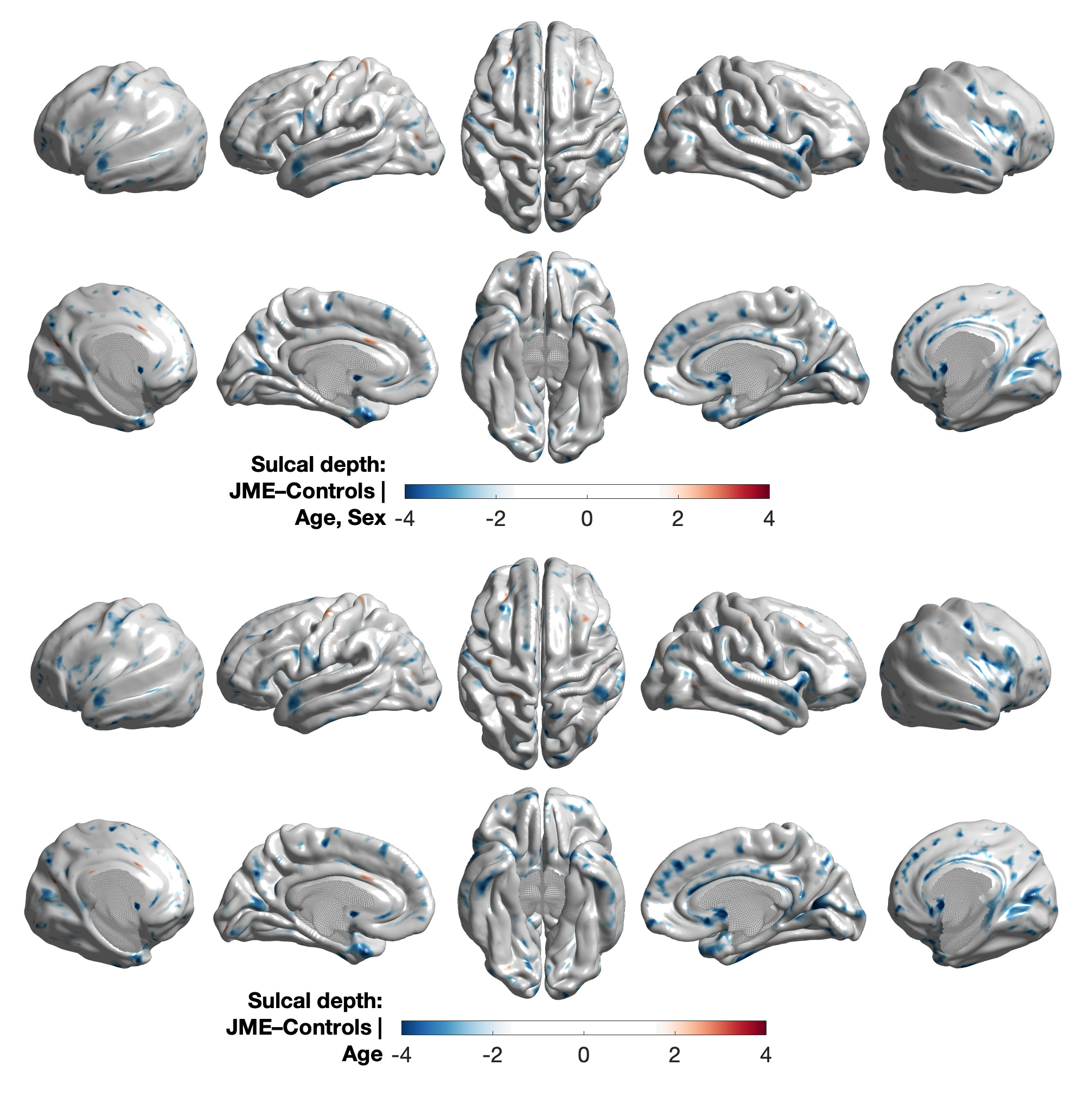}	
\caption{\blue{$t$-statistic maps of sulcal depth for the JME--control contrast, adjusted for age and sex ({\bf top}) and for age only ({\bf bottom}). The analysis adjusted for age and sex shows predominantly negative effects (JME$<$controls), with significant regions surviving FDR correction at $q<0.05$, whereas no positive effects survive FDR correction. Thus, sulcal depth primarily identifies lower depth in JME relative to controls. Unlike the Poisson potential, sulcal depth does not reveal the spatially heterogeneous pattern of positive and negative group differences, suggesting that the Poisson potential captures broader alterations in cortical folding organization beyond local sulcal depth.}}
\label{fig:sulcaldepth}
\end{figure*}

\blue{
Sulcal depth provides a local geometric characterization of cortical folding by measuring the distance of each cortical vertex from an outer cortical envelope (Fig. \ref{fig:flux-group}) \cite{vanessen.2005,yao.2023,yun.2013}. Unlike the signed Poisson potential, sulcal depth is nonnegative and primarily quantifies the magnitude of cortical indentation. We computed vertexwise sulcal depth on the pial surface and applied the same GLM used for the Poisson potential, with group as the variable of interest and age and sex as covariates (Fig. \ref{fig:sulcaldepth}). The JME--control contrast was predominantly negative, indicating lower sulcal depth in JME across widespread cortical regions. After adjustment for age and sex, negative effects survived FDR correction at $q<0.05$, whereas no positive effects survived at this level. No sulcal-depth effects survived the more stringent $q<0.01$ threshold. 

The spatial pattern differed substantially from that obtained with the Poisson potential. Sulcal depth predominantly detected a unidirectional JME--control difference, whereas the Poisson potential revealed spatially heterogeneous positive and negative effects. This distinction follows directly from the geometric information represented by the two measures. Sulcal depth quantifies how deeply a cortical location lies relative to an outer envelope, but does not explicitly encode the opposing contributions of sulci and gyri. In contrast, the Poisson model treats sulci and gyri as a distributed source--sink system and integrates these opposing contributions through the screened Poisson equation (\ref{eq:regualized}). Consequently, the Poisson potential characterizes the spatial organization of sulcal--gyral folding rather than the depth of an individual sulcal location alone.}

\subsection{Comparison against local gyrification index}

\begin{figure*}[t!]
	\centering
	\includegraphics[width=0.85\linewidth]{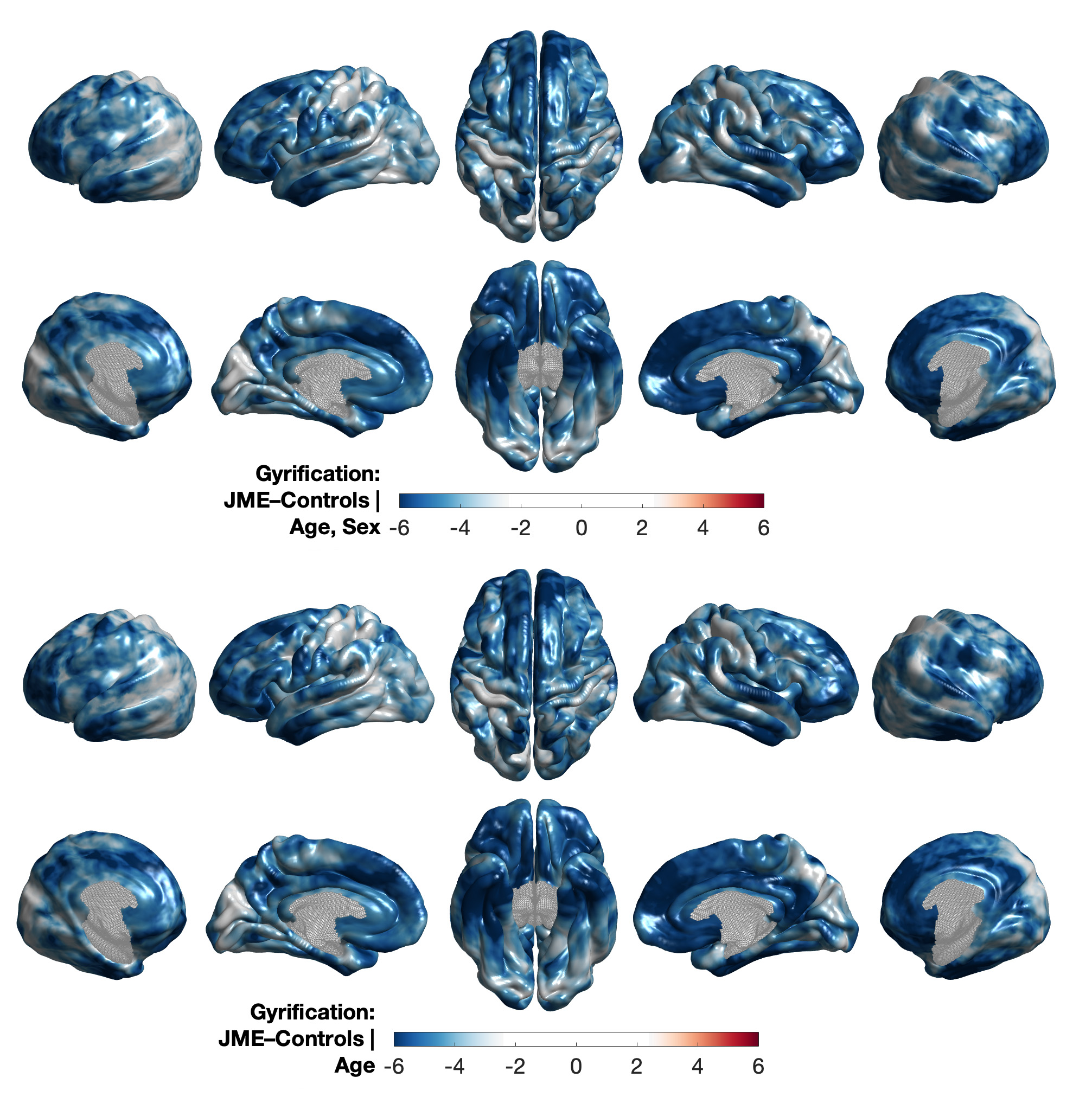}	
\caption{\blue{$t$-statistic maps of local gyrification index (lGI) for the JME--control contrast, adjusted for age and sex ({\bf top}) and for age only ({\bf bottom}). Both analyses show exclusively negative effects (JME$<$controls), with significant regions surviving FDR correction at $q<0.01$ ($t<-2.38$), whereas no positive effects survive FDR correction even at $q<0.10$. Thus, lGI identifies a widespread pattern of \blue{lower} local cortical gyrification in JME relative to controls. Unlike the Poisson potential, lGI does not reveal spatially heterogeneous positive and negative group differences, suggesting that the Poisson potential captures broader alterations in cortical folding organization beyond local gyrification.}}
\label{fig:lGI}
\end{figure*}

\blue{
We further compared the Poisson potential with the local gyrification index (lGI), a nonnegative measure of the amount of cortical folding within a local neighborhood (Fig. \ref{fig:flux-group}) \cite{schaer.2008,schaer.2015,zilles.1988}. The lGI was computed vertexwise as the ratio between the area of the folded pial patch and the area of its corresponding local outer envelope. Larger lGI values indicate greater local cortical folding, whereas values approaching one indicate relatively less folded geometry. The same vertexwise GLM was applied to the lGI maps.

The lGI showed exclusively negative JME--control effects. With age as a covariate, negative effects survived FDR correction at $q<0.01$, corresponding to a threshold of $t=-2.38$, whereas no positive effects survived even at $q<0.1$ (Fig \ref{fig:lGI}). The result remained essentially unchanged after additionally adjusting for sex, with negative effects surviving at $q<0.01$ at a threshold of $t=-2.37$ and again no positive effects surviving FDR correction. Thus, lGI indicates a predominantly lower local folding magnitude in JME. The finding is consistent with the predominantly larger cortical thickness and \blue{lower} sulcal depth  in JME.

The lGI and Poisson potential characterize fundamentally different aspects of folding geometry. The lGI summarizes the magnitude of folding through a local surface-area ratio and therefore does not explicitly or directly distinguish whether the underlying geometry arises from sulcal or gyral configurations. Consistent with this property, the significant lGI differences were entirely unidirectional. The Poisson potential is instead signed and preserves opposing sulcal and gyral contributions by modeling them as generators of an intrinsic flow field. Rather than restricting the representation to the amount of folding within a predefined local neighborhood, the screened Poisson equation (\ref{eq:regualized}) integrates curvature information intrinsically over the cortical manifold, with $\lambda$ controlling its effective spatial scale. The resulting Poisson analysis reveals spatially heterogeneous positive and negative group differences that are not represented by the predominantly negative lGI effects, suggesting that the Poisson potential captures differences in the spatial organization of sulcal--gyral geometry beyond local folding magnitude alone.
}

\subsection{Comparison against diffused mean curvature}
\blue{
Mean curvature is highly sensitive to local surface variation and is therefore commonly smoothed before statistical analysis \cite{chung.2005.IPMI,chung.2007.TMI}. We use heat diffusion as a conventional smoothing baseline \cite{chung.2003.CVPR,chung.2005.NI}. For Poission parameter $\lambda=0.1$, the corresponding diffusion time $t=10$ is used to match the smoothing scale.

We then performed the same vertexwise group analysis on the heat-diffused mean curvature, adjusting for age and, separately, for age and sex. After adjustment for age, significant JME--control differences were detected only at FDR $q<0.10$, with thresholds of $|t|\geq 3.86$. No vertices survived FDR correction at $q<0.05$. Adjustment for both age and sex produced similar results, with significant effects only at FDR $q<0.10$ and thresholds of $|t|\geq 3.71$; again, no effects survived at $q<0.05$. Thus, even at a diffusion scale matched to the screened Poisson model, heat-diffused mean curvature produces substantially weaker group discrimination. This is consistent with the exponential attenuation of higher-frequency signals by heat diffusion, which can wash out subtle sulcal and gyral variations associated with disease-related cortical folding patterns. This is also evident in Figure \ref{fig:diffusion}, where the group differences appear extremely faint and spatially scattered compared to the substantially stronger and more spatially coherent effects observed with the screened Poisson approach.
}

\begin{figure*}[t!]
	\centering
	\includegraphics[width=0.85\linewidth]{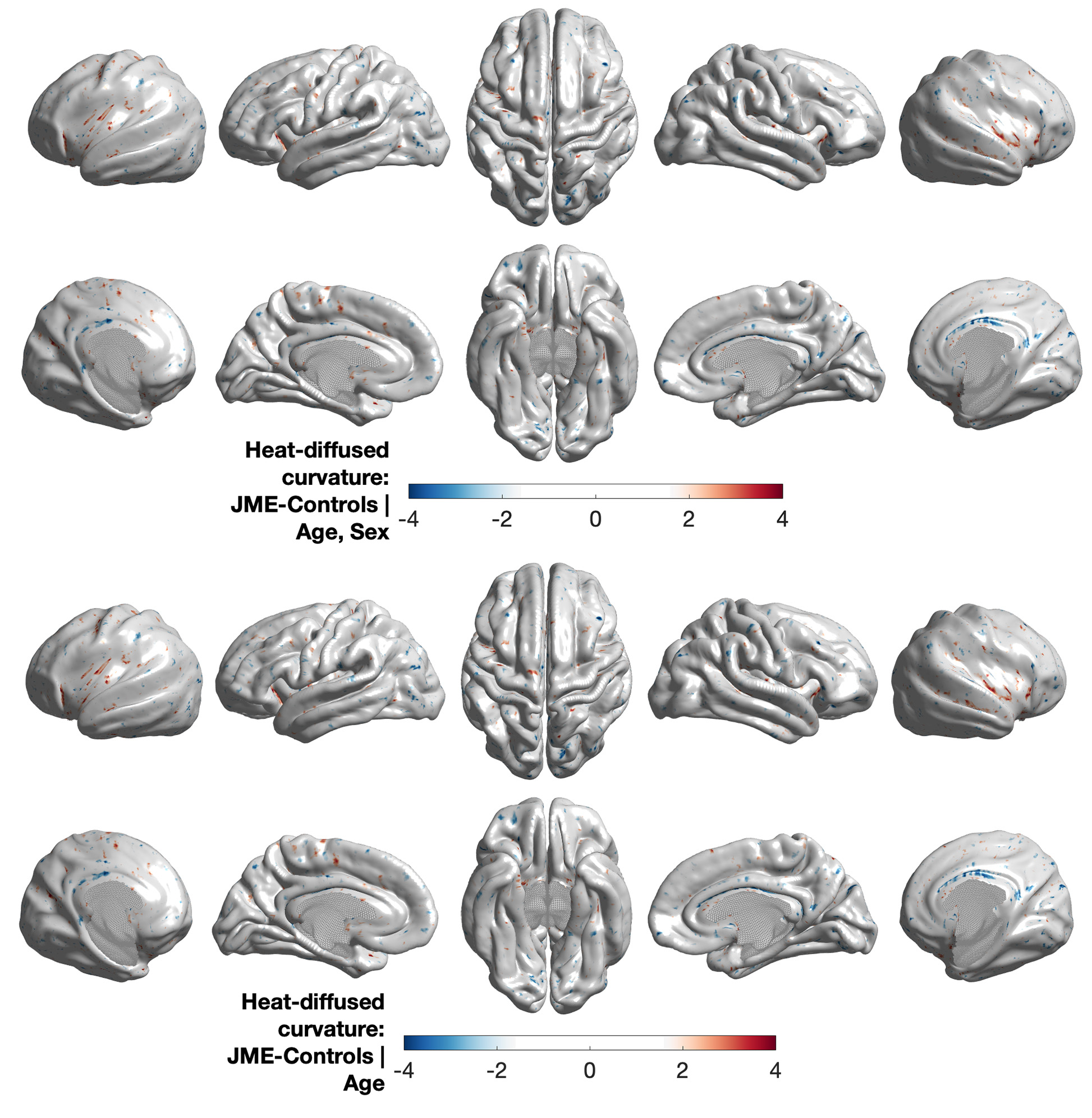}	
\caption{\blue{$t$-statistic maps of heat-diffused mean curvature for the JME--control contrast, adjusted for age and sex ({\bf top}) and for age only ({\bf bottom}). Heat diffusion was performed at the matched diffusion time $t=10$, corresponding to $\lambda=0.1$ in the screened Poisson model. Significant effects survive FDR correction only at $q<0.10$ and appear faint and spatially scattered, whereas the screened Poisson approach detects more spatially coherent effects surviving FDR correction at $q<0.01$.}}
\label{fig:diffusion}
\end{figure*}

\section{Discussion \& Conclusion}

This study introduces a geometry-driven Poisson flow framework for modeling and analyzing cortical folding organization, and applies it to JME. By recasting cortical folding as a source--sink system derived from mean curvature and solving a screened Poisson \blue{equation (\ref{eq:regualized})} on the cortical manifold, we obtain a smooth scalar potential whose gradient defines an intrinsic folding-induced flow. This provides a principled link between local folding polarity and large-scale geometric organization, while avoiding the instability and interpretational ambiguity associated with direct vector-field inference.

\blue{
Applied to JME, our results reveal a strikingly asymmetric and bidirectional pattern of cortical folding alterations. Cortical folding is a developmental phenotype shaped by interacting mechanical and biological processes, including differential cortical growth and axonal tension \cite{vanessen.1997,zilles.2013}. The spatially heterogeneous alterations detected by the Poisson flow model therefore indicate differences in the global organization of cortical folding in JME that are not fully captured by conventional local morphometric measures. JME is increasingly viewed as a distributed neurodevelopmental disorder \cite{wandschneider.2019,struck.2025}, and our results provide complementary geometric evidence of widespread cortical involvement rather than abnormalities confined to isolated regions. These findings suggest that altered cortical folding may represent a structural phenotype of atypical cortical development associated with JME.
}

\subsection{Advantages of screened Poisson representation}
\blue{

The Laplace--Beltrami (LB) operator $\mathcal{L}$ has played an important role in cortical surface analysis, beginning with its use for heat diffusion of cortical signals \cite{chung.2003.CVPR,chung.2005.NI}. \cite{qiu.2006} subsequently used LB eigenfunctions to construct intrinsic spectral representations of cortical surfaces. \cite{shi.2009} introduced an LB feature space for cortical shape analysis and sulcal detection, while \cite{shi.2010} used LB eigen-projection for smooth anatomical surface reconstruction. In contrast, the present framework applies the screened inverse operator $(\mathcal{L}+\lambda I)^{-1}$ to signed cortical curvature, yielding a continuous potential representation of sulcal and gyral organization whose spatial scale is controlled by $\lambda$. Our implementation avoids explicit computation of individual LB eigenfunctions, which can be computationally expensive on high-resolution cortical meshes. Instead, the screened Poisson equation (\ref{eq:regualized}) is solved directly on the triangulated cortical surface using the finite element system  without eigendecomposition.

A main motivation for the screened Poisson equation (\ref{eq:regualized}) is that cortical folding is intrinsically multiscale. Fine sulcal and gyral structures contribute to higher spatial frequencies and may therefore be attenuated together with noise by conventional smoothing. Widely used heat diffusion filters suppresses the $k$-th Laplace--Beltrami eigenmode exponentially as $e^{-t\gamma_k}$ \cite{chung.2015.MIA,chung.2005.NI,huang.2020.TMI}, whereas the screened Poisson response $1/(\gamma_k+\lambda)$ decays only algebraically. Consequently, at matched smoothing scales, screened Poisson filtering retains substantially more high-frequency folding information. This difference is illustrated geometrically in Fig.~\ref{fig:poisson-diffusion} and spectrally in Fig.~\ref{fig:attenutation}, where heat diffusion rapidly suppresses fine-scale structure while screened Poisson filtering produces more gradual attenuation. 

A related limitation arises with hard spectral truncation, widely used in Laplace--Beltrami spectral expansion and spherical harmonic expansion \cite{qiu.2006,chung.2007.TMI,huang.2020.TMI,shi.2010}, in which the cortical signal or geometry is approximated using a finite number of low-frequency basis functions. Unlike heat diffusion, which exponentially attenuates higher-frequency components, spectral truncation imposes an abrupt cutoff by completely eliminating components above the prescribed frequency threshold. Consequently, anatomically meaningful fine-scale folding information contained in higher-frequency components may also be lost. In contrast, the screened Poisson representation retains all spectral components but progressively attenuates them according to $1/(\gamma_k+\lambda)$.
}

\subsection{Cortical folding alignment}
\blue{
A potential concern in vertexwise cortical analysis is that corresponding vertices across subjects may not represent precisely homologous sulcal or gyral locations due to substantial inter-subject variability in cortical folding in FreeSurfer. However, FreeSurfer does not simply extract cortical surfaces as triangular meshes; it subsequently maps each cortical hemisphere to a sphere and performs surface registration driven by cortical folding geometry, thereby providing a folding-based alignment across subjects \cite{fischl.1999}. Thus, the vertex correspondence used in our analysis incorporates an explicit, albeit coarse, alignment of cortical folding geometry. 

Although FreeSurfer provides folding-based surface correspondence, exact alignment of individual sulcal and gyral structures remains challenging because local geometry, branching patterns, and topology vary across subjects. Specialized methods are therefore required when precise sulcal correspondence is desired. For example, \cite{joshi.2012.WBIR} performed cortical registration using mean curvature as a folding feature followed by local refinement of individual sulcal curves, while \cite{joshi.2012.ISBI} introduced geodesic curvature flow for sulcal alignment. Even with such specialized methods, direct curve-to-curve correspondence can be intrinsically ambiguous; for example, a sulcus may have two branches in one subject and three in another. Rather than imposing such explicit sulcal correspondence, we represent the entire folding pattern by a smooth screened Poisson potential. For $\lambda=0.1$, the empirical half-correlation radius is 6.2 mm, corresponding to an effective spatial extent of approximately 12.4 mm, thereby reducing sensitivity to residual local misalignment.

Prior quantitative evaluations further support the use of FreeSurfer correspondence for cortical morphometric analysis. Using manually identified cortical areas as anatomical ground truth, \cite{yeo.2010} reported FreeSurfer localization errors on the order of several millimeters across cortical regions, quantified by mean Hausdorff distance. Using manually labeled sulcal and gyral curves, \cite{zhong.2010} quantified residual alignment by curve variation in mm$^2$, corresponding to discrepancies on the order of several millimeters. Using manually delineated cortical ROIs, \cite{pantazis.2010} reported a mean Dice coefficient of 0.721 for FreeSurfer compared with 0.727 for landmark-based registration, with no significant difference between the methods ($p=0.64$). Collectively, these evaluations indicate that residual spatial discrepancies after FreeSurfer registration are generally on the order of several millimeters, smaller than the approximately 12.4 mm spatial extent of the screened Poisson representation.

Further, coarse cortical metrics such as cortical thickness, sulcal depth, and local gyrification index are traditionally compared directly on FreeSurfer-aligned meshes without an additional surface registration \cite{fischl.2000,schaer.2008,van.2021}. Thus, to provide a consistent comparison with these conventional morphometric measures, we retained the same FreeSurfer correspondence for the screened Poisson potential rather than introducing an additional registration procedure that would selectively improve the alignment of the proposed representation. Despite residual inter-subject folding variability, the screened Poisson representation produced spatially coherent effects surviving FDR correction at $q<0.01$, whereas scale-matched heat diffusion produced only faint and scattered effects at $q<0.1$. This suggests that the Poisson effects reflect broader folding organization rather than exact alignment of fine-scale folds.
}

\subsection{Future work}

Future work should integrate the proposed geometric framework with diffusion and functional imaging to determine whether the distributed cortical abnormalities identified here are related to alterations in structural or functional networks and whether these relationships are associated with seizure susceptibility or cognitive phenotype.

The regularization parameter $\lambda$ governs the spatial scale of the inferred folding patterns; while fixed here for consistency and comparisons, adaptive and multiscale extensions may further improve sensitivity. Longitudinal imaging will also be important for determining whether the observed geometric differences reflect altered developmental trajectories rather than static group effects. These extensions would establish a direct link between the cortical geometry quantified here and the distributed network abnormalities implicated in JME.

\paragraph*{Acknowledgment.}
This study is funded by NIH MH133614, NS111022 and NSF DMS-2010778. 


\end{document}